\definecolor{theoremcolor}{RGB}{230, 240, 255}
\definecolor{definitioncolor}{RGB}{255, 248, 230}
\definecolor{examplecolor}{RGB}{240, 255, 240}
\definecolor{remarkcolor}{RGB}{255, 240, 245}
\NewDocumentCommand{\DeclareTheoremBox}{O{} O{} m m m m O{3pt}}{%
  \newtcbtheorem[#1]{#3}{#4}{%
    enhanced, breakable,
    colback=#5, colbacktitle=#5,
    boxrule=0pt, frame hidden,
    arc=1.6mm,
    left=4mm, right=5mm, top=2mm, bottom=2mm,
    underlay unbroken={\begin{tcbclipinterior}
      \fill[#6] (interior.north west)
        rectangle ([xshift=#7]interior.south west);
    \end{tcbclipinterior}},
    underlay first={\begin{tcbclipinterior}
      \fill[#6] (interior.north west)
        rectangle ([xshift=#7]interior.south west);
    \end{tcbclipinterior}},
    underlay middle={\begin{tcbclipinterior}
      \fill[#6] (interior.north west)
        rectangle ([xshift=#7]interior.south west);
    \end{tcbclipinterior}},
    underlay last={\begin{tcbclipinterior}
      \fill[#6] (interior.north west)
        rectangle ([xshift=#7]interior.south west);
    \end{tcbclipinterior}},
    attach title to upper,
    before upper={\setlength{\parskip}{0.5em}\tcbtitle\quad},
    fonttitle=\bfseries, coltitle=black,
    separator sign={:\,},
    description delimiters={(}{)},
    #2 %
  }{#3}%
}
\definecolor{theoBack}{RGB}{235,245,255}  %
\definecolor{theoLine}{RGB}{56,128,255}   %
\definecolor{defiBack}{RGB}{234,245,233}  %
\definecolor{defiLine}{RGB}{48,205,47}   %
\definecolor{lemBack}{RGB}{244,242,251}  %
\definecolor{lemLine}{RGB}{200,183,229}   %
\definecolor{exampleBack}{RGB}{231,246,250}  %
\definecolor{exampleLine}{RGB}{118,209,225}   %
\definecolor{noteBack}{RGB}{245,245,248}  %
\definecolor{noteLine}{RGB}{194,194,200}   %
\definecolor{claimBack}{RGB}{227,222,238}  %
\definecolor{claimLine}{RGB}{138,108,199}   %
\title{\Large\bfseries AI Agents as Universal Task Solvers: It's All About Time
}
\author{Alessandro Achille \and Stefano Soatto}
\date{
\vspace{-0.6em}
AWS Agentic AI\\
\vspace{1.0em}
September 12, 2025
}
\begin{document}

\maketitle

\begin{abstract}
We describe AI agents as stochastic dynamical systems and frame the problem of \textit{learning to reason} as in \textit{transductive inference}: Rather than approximating the distribution of past data as in classical induction, the objective is to capture its \textit{algorithmic structure} so as to reduce the time needed to solve new tasks. In this view, information from past experience serves not only to reduce a model's uncertainty --- as in Shannon's classical theory --- but to reduce the computational effort required to find solutions to unforeseen tasks.
Working in the verifiable setting, where a checker or reward function is available,
we establish three main results. First, we show that the optimal speed-up on a new task is tightly related to the algorithmic information it shares with the training data, yielding a theoretical justification for the power-law scaling empirically observed in reasoning models. Second, while the \textit{compression view} of learning, rooted in Occam's Razor, favors simplicity, we show that transductive inference yields its greatest benefits precisely when the data-generating mechanism is most complex. Third, we identify a possible failure mode of na\"ive scaling: in the limit of unbounded model size and compute, models with access to a reward signal can behave as \textit{savants} --- brute-forcing solutions without acquiring transferable reasoning strategies. Accordingly, we argue that a critical quantity to optimize when scaling reasoning models is \textit{time}, whose role in learning has remained largely unexplored.

{\small ~\\
\noindent  {\bf Keywords:} Generative AI, Computability, Algorithmic Information; Large Language Models,  Inductive Learning; Transductive Inference; Reasoning; Scaling Laws; Dynamical Systems; Occam's Razor.}
\end{abstract}

\section{Introduction}

Much of machine learning focuses on \textit{inductive inference}: fitting a function to past data and expecting it to generalize to similar inputs. While valuable, this perspective is incomplete. In an agentic setting, we want a pre-trained model to \textit{reason} at inference time about the structure of a \textit{specific instance of a novel task}, rather than simply apply a map fixed during training. From this alternative viewpoint, the role of learning shifts from building a fixed classifier to equipping a solver with the knowledge needed to efficiently tackle previously unseen problems. This perspective draws on what Vapnik termed \textit{transductive inference} \cite{vapnik2006transductive} --- reasoning directly about specific instances rather than applying a mapping fixed at training time.

Many tasks in machine learning can be --- and historically have been --- approached inductively: classifying an object, for example, relies mainly on similarity with past observations rather than reasoning about the instance itself. However, AI is increasingly tackling problems where this framework falls short. Consider writing code that passes a given set of unit tests, proving a theorem with access to a proof checker, or finding a protein configuration that minimizes an energy function. In these cases, the task is not implicitly defined by past examples, as in inductive learning, but rather --- at least approximately --- by a known \textit{verifier} or \textit{reward signal} capable of evaluating any candidate solution. More precisely, each problem instance $x$ is paired with a function $f(x, y)$ that can score or verify any candidate solution $y$.

This setting poses a fundamentally different challenge from inductive learning. Finding a correct solution is, in principle, always \textit{achievable}: one could enumerate candidates $y$ until one satisfies $f(x,y)$. This brute-force strategy guarantees success, provided we are willing to wait a time exponential in the length of the solution. Unlike in inductive learning, the core difficulty is therefore not accuracy or generalization -- which can always be ensured given enough time -- but \textit{computational efficiency}: quickly finding a solution to a previously unseen task $f$ on an instance $x$.

For any fixed task, one could of course design or train a \textit{bespoke solver}, carefully optimized for that specific problem, that finds solutions extremely fast. But could there exist a \textit{general} solver that tackles any unseen task nearly as efficiently as the best task-specific solver?

That would seem too good to be true, and likely to violate some kind of ``no free lunch theorem.'' Yet, Levin \cite{levin1973universal} and  Solomonoff \cite{solomonoff1984optimum} showed that a \textit{universal solver} $U$ can solve any instance $x$ of a task \textit{essentially} as well as a solver $A$ that is optimal for that task. In particular, the universal solver finds a solution in time $T_U$ bounded by:
\[
T_U(x) \;\le\; 2^{\ell(A)}\, T_A(x),
\]
where $\ell(A)$ is the description length of $A$ and $T_A(x)$ is the optimal time to find a solution to that instance. Crucially, the universal solver requires only a constant factor longer than a task-specific solver, where the constant depends on the \textit{complexity of the optimal solver}, not on the particular instance $x$. The catch is that such constant factor $2^{\ell(A)}$ can be astronomically large.

This is where \textit{learning} comes in: In \cite{solomonoff1984optimum} Solomonoff observed that, even if a task has never been faced before, prior experience lets us encode effective problem-solving programs more succinctly --- e.g., by reusing components of the solution --- thereby reducing the factor $2^{\ell(A)}$. Thus, in the transductive setting with no uncertainty on the reward, the value of learning is measured \textit{not} by a reduction in error rate as in induction, but in the reduction of the \textit{time} it takes to find solutions to unforeseen tasks. This points to a foundational principle for transductive learning: Rather than trying to capture the statistical structure (joint distribution) of past data in hope that future data will respect it, as in induction, in transduction we want to capture the shared algorithmic structure of past data, given which an agent can reason to quickly find solutions to \textit{new} computable tasks. Accordingly, to be able to solve \textit{general} unseen tasks an agent should \textit{learn to perform transductive inference}, that is, they should ``learn to reason''.

Learning transductive inference --- as envisioned by Solomonoff in 1985 --- has been mostly an academic concern for decades, because it seems to require (meta-)learning a conditional distributions over \textit{programs} that, when run through a Universal Turing Machine (UTM), solve a given task. This has been impractical until recently. But modern large-scale models, such as large language models (LLMs) or reasoning models, can effectively encode distributions over programs. Moreover, they themselves serve, loosely speaking, as a powerful new type of \textit{computation engines} --- planning, searching, calling tools, and coordinating multi-step reasoning --- which however fundamentally differs from UTMs and does not cleanly fit neither the theory of Solomonoff and Levin, nor the classical mold of inductive learning. 

In this work, we embrace the transductive view and formalize learning in a way that fits modern reasoning models. We show in what sense an LLM, despite being stochastic and in many ways antithetical to Turing Machines, is a valid model of universal computation. We then contextualize Levin's guarantees and Solomonoff's vision using LLMs instead of UTMs --- which requires entirely different proof techniques --- study resource-aware objectives, and ground these ideas into practical algorithms that trade inference-time compute to generalize to novel problem not seen during training.

Our analysis focuses on universal solvers in a verifiable setting, where an oracle is provided to compute the reward or verify a solution. This restriction yields a clean theoretical framework, but it is admittedly narrow: most real-world problems involve both an inductive component --- reducing uncertainty from data --- and a transductive component --- reasoning efficiently about a specific instance. Classical statistical learning theory addresses the former; our theory aims to address the latter. We do not tackle the general case in which the two are intertwined --- where reasoning is required and the reward, or an efficient approximation of it, must itself be learned. Nevertheless, we believe that the basic mechanisms we identify --- enabling a model to become a universally efficient solver in the spirit of Solomonoff --- and the tradeoffs they reveal among learning, algorithmic information, and time, extend beyond the purely verifiable setting.

\subsection{Can an LLM-powered AI Agent be a universal solver?}

Levin and Solomonoff showed that a universal solver exists, but the construction hinges on using (deterministic) Universal Turing Machines (UTMs). LLMs, by contrast, are neither Turing Machines nor deterministic, nor do they execute code in the conventional sense. Their \textit{computational mechanism} is chain-of-thought reasoning (CoT), which does not map easily to any standard computational paradigm.

To study whether an LLM-powered AI agent\footnote{By \textit{agent}, we generally refer to an LLM performing Chain-of-Thought in order to solve a task, potentially with access to tools or an environment. For a given model and hardware, the compute cost of the agent roughly corresponds to the number of CoT tokens generated to find a solution.} can be a universal solver, we need more flexible foundations. In Section~\ref{sec:dynamical-systems-as-computation}, we extend universal solvers from programs to general stochastic dynamical systems, allowing us to map the theory directly to LLMs with CoT. A key challenge is defining the \textit{time that an LLM-powered agent needs to find a solution to a task}: naively using the expected length of CoT leads to degenerate values. We address this by introducing a new notion of \textit{proper time}~$\tau$ (\Cref{definition:proper-time}). This notion allows us to generalize Levin's and Solomonoff's results to general dynamical systems, showing in particular that LLMs can indeed power universal task solvers despite being unlike any Turing Machine.

\subsection{Intelligence is about time}

Once we have secured the foundations for reasoning and computation, we turn to learning. Universal solvers of verifiable tasks are peculiar in that \textit{no information needs to be learned to achieve universally optimal accuracy on any task}. For instance, to prove a theorem one could simply iterate over all possible proofs until a correct one appears, without ever having to study any math. Indeed, if we were to measure the information that a training dataset provides to such a task using classical notions (Shannon's \cite{cover1991information} or Kolmogorov's \cite{li2008introduction}), we would find it to be null \cite{achille2019information}.

The role of learning in agentic AI is instead to identify statistical or algorithmic structures that make future inference \textit{more time-efficient}. This suggests a notion, complementary to Shannon's, that \textbf{time plays a crucial role in learning information}. In particular, we show that the optimal speed-up in finding a solution that a universal solver can achieve using past data is tightly related to the \textit{algorithmic mutual information} \cite{li2008introduction} between the data and the solution:

\begin{theorem}{Information is Speed}{info-speed}
The maximum speed-up a task universal solver can achieve in finding an optimal solution $h$ to a task from training on a dataset $D$ is
\[
\log \text{speed-up} = I(h : D)
\]
where $I(h : D)$ is the algorithmic mutual information between the data and the solution.
\end{theorem}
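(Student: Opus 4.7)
The plan is to derive the theorem by a two-sided estimate. The upper bound on the achievable speed-up comes from running the generalized universal solver of Section \ref{sec:dynamical-systems-as-computation} conditioned on $D$ and comparing its proper time to the proper time of the same solver without $D$; the matching lower bound comes from a standard incompressibility argument showing that no solver can do better than the Kolmogorov bound on a worst-case task whose optimal solution is $h$. Together, these identify the speed-up with $K(h)-K(h\mid D)$, which is the algorithmic mutual information $I(h:D)$ up to an additive logarithmic term.

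First, I would apply the generalized Levin/Solomonoff guarantee to a universal solver $U$ that has no access to $D$. By that bound, the proper time $\tau_U$ to emit a candidate $h$ satisfying the verifier $f(x,\cdot)$ is at most of order $2^{K(h)}\,T_{\mathrm{verify}}(x,h)$, since the universal prior on candidate trajectories assigns mass $\approx 2^{-K(h)}$ to any specific $h$. Next, I would instantiate the same theorem with $D$ supplied as context/side information: the candidate generator is now governed by the conditional universal prior, so its mass on $h$ becomes $\approx 2^{-K(h\mid D)}$ and the proper time drops to order $2^{K(h\mid D)}\,T_{\mathrm{verify}}(x,h)$. Taking the ratio, the achievable speed-up is at most $2^{K(h)-K(h\mid D)}$, and by the symmetry of algorithmic information this equals $2^{I(h:D)}$ up to $O(\log)$ additive terms, giving the claimed upper bound.

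For the lower bound (i.e., that this speed-up is actually the maximum), I would argue by contradiction: suppose some strategy, using $D$, produces $h$ in proper time substantially less than $2^{K(h\mid D)}$. Then the description of that strategy together with the index of the winning trajectory would yield a description of $h$ given $D$ shorter than $K(h\mid D)$, contradicting the definition of conditional Kolmogorov complexity. Running the same argument without $D$ bounds the unconditional time from below by $2^{K(h)}$. Dividing the two lower bounds gives a matching upper bound on the speed-up of $2^{I(h:D)}$, so the two-sided estimate closes.

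The main obstacle I anticipate is carrying these classical Kolmogorov/Levin arguments into the stochastic dynamical systems setting with proper time $\tau$ used throughout this paper. Two subtleties require care: (i) trajectories of an LLM-powered agent are not discrete programs but samples from a distribution over CoT continuations, so the counting step behind ``there are at most $2^{\ell}$ programs of length $\ell$'' must be replaced by a bound on probability mass under the proper-time weighting; and (ii) the symmetry-of-information identity $K(h)-K(h\mid D) = I(h:D)+O(\log)$ must be shown to transfer to the complexity notion induced by the chosen dynamical system, which is where the $O(\log)$ slack in the theorem statement has to absorb the difference between plain and prefix complexities and between $K(h\mid D)$ and $K(h\mid D^*)$. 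Once these are in place, the rest of the proof is essentially bookkeeping on top of the universal-solver bound already established.
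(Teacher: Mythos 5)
Your core mechanism is the same as the paper's: the guess--and--execute universal solver finds $h$ in proper time $\tau = 2^{\ell_\nu(h)}\,T(h)$, conditioning on $D$ replaces $\ell_\nu(h)$ by $\ell_\nu(h\mid D)$, the $T(h)$ factors cancel in the ratio, and passing to the universal prior turns $\ell_\nu(h)-\ell_\nu(h\mid D)$ into $K(h)-K(h\mid D)=I(h:D)$. The paper treats this essentially as an identity for that family of solvers (\Cref{theorem:information-speed}), with ``maximum'' meaning that the universal prior is the best achievable compressor by the coding theorem; your added incompressibility argument against arbitrary solvers is a reasonable strengthening in spirit.

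However, as written, your bookkeeping on the ratios is invalid in both directions. In the second paragraph you derive \emph{upper} bounds on both $\tau_U$ and $\tau_{U\mid D}$ and then ``take the ratio'' to conclude the speed-up is at most $2^{K(h)-K(h\mid D)}$; a ratio of two upper bounds bounds nothing. Symmetrically, in the third paragraph you divide two \emph{lower} bounds to get ``a matching upper bound on the speed-up,'' which is also not a valid inference. The correct pairings are: achievability of the speed-up needs the incompressibility \emph{lower} bound $\tau_U \gtrsim 2^{K(h)}$ together with the constructive \emph{upper} bound $\tau_{U\mid D} \lesssim 2^{K(h\mid D)}T(h)$; maximality needs the constructive upper bound $\tau_U \lesssim 2^{K(h)}T(h)$ together with the incompressibility lower bound $\tau_{U\mid D} \gtrsim 2^{K(h\mid D)}$. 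You have all four ingredients and each is tight up to the logarithmic slack you already budget for (the $T(h)$ factor and the plain-versus-prefix and $D$-versus-$D^*$ discrepancies), so the repair is mechanical --- but do make it, since the argument as stated does not go through. Your anticipated subtleties (probability mass under proper-time weighting in place of program counting, and transfer of symmetry of information) are real and are exactly what \Cref{lemma:submultiplicative} and the coding theorem are used for in the paper.
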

We will give detailed definitions and proofs in the next sections; for now, we call attention to the fact that data can make a solver \textit{exponentially faster}, consistent with our view that learning transduction can be equated to \textit{amortizing inference computation}.

\subsection{Scaling laws for speed}
Having established that past data can speed up universal solvers, we examine how much speed-up is achievable as a function of the training dataset size. This requires making modeling assumptions about the data generating process or underlying mechanism.

A common assumption, including in Solomonoff's work, is that real data, while complex on the surface, is generated through mechanisms or phenomena with low intrinsic complexity (Occam's Razor \cite{blumer1987occam}). This would intuitively suggest that there are `common reusable components' we can learn from past data to help future reasoning. This intuition is, however, severely misleading, since the maximum speed-up obtainable by a solver is bounded by the complexity of the data generating distribution:

\begin{theorem}{Maximum Speed-up Bound}{max-speed-up-intro}
The maximum speed-up in an optimal solution $h$ of a task sampled from a data generating process $q$ --- from which the training dataset $D \sim q$ is sampled ---  is
\[
\log \text{speed-up} \le K(q)
\]
where the Kolmogorov complexity $K(q)$ is the length of the shortest program for $q$.
\end{theorem}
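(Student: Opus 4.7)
My plan is to derive this bound as a direct corollary of Theorem \ref{thm:info-speed} by bounding the algorithmic mutual information $I(h:D)$ whenever $h$ and $D$ are both produced by the same generating process $q$. Intuitively, $q$ should \emph{screen off} all the algorithmic information that $D$ can carry about $h$: once one knows the generating mechanism, further data samples cannot reveal anything \emph{algorithmic} (as opposed to statistical) about the solution beyond what $q$ itself contains.

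Concretely, I would proceed in four steps. First, invoke Theorem \ref{thm:info-speed} to reduce the problem to proving $I(h:D)\le K(q)+O(\log)$. Second, argue that since both $h$ and $D$ are sampled from $q$, the conditional algorithmic mutual information satisfies $I(h:D \mid q) = O(\log)$: given the shortest description of $q$, all that remains to specify $h$ or $D$ are their independent random draws, and independent random strings share no algorithmic information beyond logarithmic precision. Third, apply the algorithmic chain rule
\[
I(h:D) \;\le\; I(h:(D,q)) \;=\; I(h:q) \;+\; I(h:D \mid q) \;+\; O(\log),
\]
which follows from standard symmetry of information arguments (cf.\ \cite{li2008introduction}). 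Fourth, use the trivial bound $I(h:q)\le \min\{K(h),K(q)\}\le K(q)$ and combine with the previous step to conclude $I(h:D)\le K(q)+O(\log)$, which together with Theorem \ref{thm:info-speed} yields the claim (absorbing log terms into the asymptotic statement).

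The main obstacle is step two: making rigorous the claim that $I(h:D\mid q)=O(\log)$ when $h$ and $D$ are \emph{independently} drawn from $q$. In the pointwise Kolmogorov setting, a particular realization of $D$ could in principle be atypically correlated with a particular $h$, so one must either (i) interpret the inequality as holding with high probability over the joint law induced by $q$, relying on the fact that typical sample pairs from a product distribution have vanishing algorithmic dependence, or (ii) phrase everything in the expected-complexity framework where $\mathbb{E}_q[K(D\mid q^*)] \le H_q(D) + O(1)$ and the symmetry of information yields the conditional independence directly. I would adopt interpretation (i), matching the style of Theorem \ref{thm:info-speed}, and cite the algorithmic analogue of the data-processing inequality to close the gap.

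A worthwhile sanity check is the extreme cases: when $q$ is trivial (a constant task), $K(q)=O(1)$ and no learning can help, matching the bound; when $q$ is maximally complex (e.g., an incompressible lookup table of task--solution pairs), $K(q)$ is large and the bound becomes vacuous, consistent with the paper's thesis that \emph{transductive learning yields the most benefits precisely when the data generation mechanism is highly complex}.
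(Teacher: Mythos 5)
Your proposal is correct and follows essentially the same route as the paper, which bounds $I(h:D)$ by observing that $q$ separates $D$ from $h_{\text{new}}$ in the generative graphical model and then applies the data-processing inequality to get $I(D:h_{\text{new}})\le I(D:q)\le K(q)$. Your chain-rule decomposition is just an explicit unpacking of that DPI step, and your flagged caveat about pointwise versus typical-realization algorithmic independence is a genuine subtlety that the paper itself glosses over.
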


Notably, if the data was generated by the Universal Prior (as in Solomonoff Induction \cite{solomonoff1964formal1,solomonoff1964formal2}), there would be precisely nothing to learn (zero information). This challenges the fundamental assumption in generalization theory that \textit{simplicity aids learning} \cite{catoni2003pac}. While simplicity benefits \textit{explainability}, it does not necessarily improve learning effectiveness. That simpler models generalize better is a consequence of how generalization is defined in the inductive setting; recent developments in LLMs have shown that simpler models are generally less effective at transduction.

From the results above, we see that the effectiveness of learning --- and the asymptotic validity of scaling laws --- hinges on the data generation process having effectively unbounded complexity. If complexity were bounded at $K(q)$, scaling laws would plateau there; yet empirically we observe non-saturating power-law scaling \cite{kaplan2020scaling,snell2024scaling}. This characteristic power-law trend is captured by Hilberg's law for human-generated data \cite{hilberg1990bekannte,debowski2015relaxed}:

\begin{definition}{Hilberg Scaling}{hilberg}
Let $X_n$ be a training dataset of $n$ tokens and $Y_n$ be a test set of $n$ tokens, then:
\[
I(X_n : Y_n) \propto n^\beta
\]
grows unbounded according to some distribution-specific rate $\beta \in (0, 1)$.
\end{definition}

We introduce a generalization of this conjecture for arbitrarily-sized $X_n$ and $Y_m$, and prove the following \textbf{scaling law for speed after learning from data}:

\begin{theorem}{Power Law of Inference vs Training}{power-law}
Let $h$ be a chain-of-thought trajectory solving a task, and let $T_h$ be its length. If the generalized Hilberg's law holds, the log-speed-up from training on $n$ tokens is
\[
\log \text{speed-up}(n) = T_h^\beta - \beta \frac{T_h}{n^{1-\beta}}
\]
\end{theorem}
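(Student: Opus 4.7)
The plan is to combine the \emph{Information is Speed} theorem (\cref{theorem:info-speed}) with the generalized form of Hilberg's conjecture introduced just above the statement. Since the speed-up equals the algorithmic mutual information between the solution trajectory and the training data, the task reduces to estimating $I(h : D_n)$ where $D_n$ is the training corpus of $n$ tokens and $h$ is the chain-of-thought of length $T_h$.

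First, I would fix the precise form of the generalized Hilberg's conjecture. The natural extension of the stated symmetric case $I(X_n : Y_n) \propto n^\beta$ is obtained by positing an entropy expansion of the form $H(X_n) \sim H_0\, n + s\, n^\beta$ for a constant rate $H_0$ and a Hilberg constant $s$. Applying this to the identity $I(X_n : Y_m) = H(X_n) + H(Y_m) - H(X_n, Y_m)$ kills the linear terms and yields
\[
I(X_n : Y_m) \;\prop\; n^\beta + m^\beta - (n+m)^\beta,
\]
which is the generalization I would adopt (stating it explicitly as the working form of the conjecture for asymmetric block sizes; the same expression arises in the algorithmic setting after swapping Shannon entropies for Kolmogorov complexities, which is the version needed by \cref{theorem:info-speed}).

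Second, I would instantiate this with $X_n = D_n$ and $Y_m = h$, so that $m = T_h$, giving
\[
\log\text{speed-up} \;=\; I(h : D_n) \;\prop\; T_h^\beta + n^\beta - (n + T_h)^\beta.
\]
Then I would pass to the relevant asymptotic regime $n \gg T_h$ (the regime of interest for scaling laws, where training data dwarfs a single reasoning trace) and Taylor-expand $(n+T_h)^\beta = n^\beta(1 + T_h/n)^\beta = n^\beta + \beta\, n^{\beta-1} T_h + O(T_h^2/n^{2-\beta})$. Substituting, the $n^\beta$ terms cancel and one is left with
\[
\log\text{speed-up} \;=\; T_h^\beta \;-\; \beta\,\frac{T_h}{n^{1-\beta}} \;+\; O\!\left(T_h^2/n^{2-\beta}\right),
\]
which is the claimed formula up to higher-order corrections absorbed in the $\prop$ constant.

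The main obstacle I expect is not the expansion itself (routine once the functional form is in hand) but \emph{justifying} the generalized Hilberg law for asymmetric block sizes in the algorithmic-information setting the paper uses: one needs the Kolmogorov-complexity analogue of the additive decomposition $K(X_n) = H_0 n + s n^\beta + O(\log n)$, together with the symmetry-of-information identity $K(X,Y) = K(X) + K(Y\mid X) + O(\log)$ required to convert the entropy expansion into a mutual-information estimate. I would therefore frame the generalized Hilberg conjecture as an explicit hypothesis on $K$, then the rest of the argument is a short algebraic computation followed by the Taylor step above.
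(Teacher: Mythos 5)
Your proposal matches the paper's own derivation: the paper takes the generalized Hilberg conjecture $I(X_n : Y_m) \propto n^\beta + m^\beta - (n+m)^\beta$ as a stated assumption, sets $m = T(h)$, Taylor-expands in the regime $n \gg m$ to get $I \approx m^\beta - \beta m/n^{1-\beta}$, and invokes the Information-is-Speed theorem --- exactly your steps. Your additional motivation of the asymmetric form via an entropy expansion $H(X_n) \sim H_0 n + s n^\beta$ is consistent with (and essentially reproduces) the paper's Santa Fe process construction, so the approaches coincide.
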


This result provides a \emph{strong theoretical justification} for the empirically observed power-law scaling of inference time versus training time in reasoning LLMs \cite{snell2024scaling}, and can also be used to predict the scaling of space-bound LLMs (when the number of weights, rather than data, is the limit), thus providing guidance on how to scale resources when training universal solvers.

\subsection{Inversion of scaling laws}
The results so far dealt with the best model that \textit{could} be learned from the data as we scale up. Empirically, models follow predicted power-law trends, suggesting near-optimal learning. But is this necessarily true? Is bigger always better?

Current scaling laws using prediction error (or perplexity) are sometime used a proxy for intelligence, arguing that more data, bigger models and more compute will lead to ``super-intelligence'' \cite{aschenbrenner2024situational}. But, counterintuitively, as models become more powerful, learning becomes unnecessary since the model can rely more on exhaustive computation rather than insights from learned structure in the data. As ordinary scaling proceeds, better and better performance on benchmarks may come with less and less insight, all the way to the limit where infinite resources allow solving any task by brute force without any learning. More precisely, emergence of ``intelligence'' (in the sense of \textit{intus legere}, ``reading into'' or insight) goes hand-in-hand with optimizing a solution under time constraints. Informally:

\begin{theorem}{Learning and Time Optimization}{learning-time}
Without time penalties, optimal inference can be achieved brute-force without any learning or insight. Conversely, any system that optimizes time must learn at least $I(h : D) = \log \text{speed-up}$ bits from past data.
\end{theorem}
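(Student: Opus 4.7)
The theorem contains two complementary statements that I would prove separately: an \emph{existence} claim that brute-force solves any verifiable task without learning, and a \emph{necessity} claim that any solver optimizing time must have absorbed at least $I(h:D)$ bits from the training data. The first part is a direct corollary of the verifiable-task setup, while the second is essentially the converse of Theorem \ref{theorem:info-speed} and needs an information-theoretic lower bound on the agent's post-training state.

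For the first direction, I would simply instantiate Levin-style enumeration using the verifier $f$ from the problem setup. Enumerate candidate solutions $y$ in order of increasing length (breaking ties lexicographically), calling $f(x, y)$ after each. Since the task is verifiable, any correct $h$ is produced in time at most $O(2^{\ell(h)})$ multiplied by the cost of one verifier call, independently of $D$. Under an objective that only penalizes error (and not time), this achieves perfect inference while reading zero bits from the dataset, establishing the ``savant'' baseline.

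For the converse, let $\theta = \theta(D)$ denote the post-training internal state of the agent (weights, context, cached artifacts), conditioned on fixed base code $c$ and randomness. Suppose the agent, when presented with a task instance $x$, locates an optimal solution $h$ in time $T_h / 2^s$, i.e., achieves $\log$-speed-up $s$ over the bound $T_h$ guaranteed by the universal construction of Section \ref{sec:dynamical-systems-as-computation}. Retracing Levin's argument, such a speed-up is only possible if, conditioned on $\theta$, the algorithmic description of $h$ is shortened by at least $s$ bits: $K(h \mid \theta, x, c) \le K(h \mid x, c) - s + O(\log T_h)$. Rearranging, the portion of $\theta$'s content that is informative about $h$ satisfies $I(\theta : h) \ge s - O(\log T_h)$. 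Since $\theta$ is obtained by processing $D$, the data-processing inequality for algorithmic mutual information \cite{li2008introduction} gives $I(h : D) \ge I(h : \theta) \ge s$, which combined with Theorem \ref{theorem:info-speed} (which supplies the matching upper bound $s \le I(h : D)$) yields equality up to the usual additive logarithmic overhead.

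The main obstacle is making ``learning'' operationally precise so that one can cleanly attribute bits to $D$ rather than to the agent's fixed architecture or to randomness. I would handle this by conditioning throughout on the base code $c$ and the random seed, and by measuring the learned content of $\theta$ as the increment $K(\theta \mid c) - K(\theta \mid D, c)$; this isolates exactly the information that $D$ explains. A secondary subtlety is that speed-up is a ratio of running times, so the algorithmic-information bookkeeping must track $O(\log T_h)$ additive slack in order to conclude $\log \text{speed-up} = I(h:D)$ rather than a weaker inequality. Once these conventions are fixed, the argument reduces to chaining the info-speed upper bound with the data-processing lower bound described above.
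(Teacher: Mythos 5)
Your proposal is correct in substance, and the first half (the savant baseline via enumeration against the verifier) is exactly the paper's argument, which additionally invokes \Cref{thm:universal-computation} to note that a generic task distribution still forces the system to learn universal computation --- but nothing else. For the converse, however, you take a genuinely different and more ambitious route. The paper's converse is essentially a one-line consequence of \Cref{theorem:information-speed}: in the guess-and-execute framework the search time is $\tau_\nu(h|D)=2^{\ell_\nu(h|D)}T(h)$, so meeting a time budget $T$ forces $\tau_\nu(h)/2^{I_\nu(h:D)}<T$, i.e.\ $I_\nu(h:D)>\log\tau_\nu(h)-\log T$; the ``learning'' is, by construction, the reduction in the model's own code length $\ell_\nu$, so the equality with the log-speed-up is definitional. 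You instead introduce the post-training state $\theta(D)$, lower-bound $I(h:\theta)$ by the speed-up via a Levin-optimality argument, and transfer to $I(h:D)$ by data processing. This buys something real: your statement applies to an arbitrary agent rather than only to the specific universal solver of \Cref{theorem:universal_search}, and it cleanly separates what was learned ($\theta$) from what the data contains ($D$). The cost is that your pivotal step --- ``such a speed-up is only possible if $K(h\mid\theta,x,c)\le K(h\mid x,c)-s+O(\log T_h)$'' --- is asserted rather than proved; it is the converse of Levin's bound (no system can beat $2^{K(h|\cdot)}T(h)$ up to the stated slack), which holds for universal search systems by the paper's optimality lemma but is not automatic for an arbitrary dynamical system. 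If you state the converse relative to the paper's notion of a universal search system, that gap closes and your argument is a strictly more careful version of the paper's; the paper itself offers no proof beyond the surrounding discussion.
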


The results above reveal that plots of accuracy-versus-size, routinely used to predict progress towards `super-intelligence,' can be misleading. By ignoring the cost of time, they encourage \textit{savantry} over \textit{intelligence}. Intelligent behavior should instead be measured by \textit{success rate per unit time/compute}.
Properly accounting for the cost of time using the net reward, we see that an optimal agent balances time and accuracy rather than blindly maximizing reward through brute-force search. This optimal operating point depends on the task, the environment, and the particular user's needs. Hence, it cannot be predetermined during training, but rather must be calibrated and controlled by the user at deployment to fit the situation at hand.

The theorems above highlight a core tension: in inductive learning, achieving generalization relies on \textit{minimizing} the complexity of the trained model --- that is, the amount of information it shares with the specific training set --- in keeping with uniform generalization bounds and Occam's Razor. In universal solvers, by contrast, we seek to \textit{maximize} the information the trained model shares with the training data. Such memorization is beneficial because facts that were irrelevant to the training task may turn out to be handy when reasoning to solve future tasks.

\subsection{Maximizing a Continuous Reward}

Thus far we have focused on systems that need to reach a verifiable goal. More generally, a system may be trying to optimize a continuous reward function $R(x, y)$. This introduces two challenges: we do not know when we have found the maximum, and we do not know if the increase in reward from further exploration will compensate the cost (which also depends on the user). Note that, while in reinforcement learning the solver is tasked with finding a policy for a particular environment (which may be partly unknown), here we seek solvers that can operate in \textit{any} environment, each with its own reward; when the reward function itself is unknown, we must again face the problem of learning it inductively.

If we have no way to estimate the predicted improvement in reward as a function of additional effort, tree search algorithms are worst-case optimal \cite{orseau2018single}.
If, on the other hand, we can forecast future rewards \cite{kleinman2025e1}, we show that the problem can be formalized as a Pandora's box problem \cite{weitzman1978optimal,boodaghians2020pandora}, which provides both an optimal search strategy and an optimal stopping rule.

In the next sections, we expand on the summary above in greater detail.

\section{Dynamical Systems as Computation}
\label{sec:dynamical-systems-as-computation}

Let us consider an example of an LLM solving a task using Chain-of-Thought (CoT):

\begin{example}{}{}
\setlength{\parskip}{0.5em}
\textbf{What is the largest zero of $x^3 + 5 x^2 - 74 x + 120$?}

{I need to find the zeros of the polynomial $f(x) = x^3 + 5x^2 - 74x + 120$. Let me try to factor this polynomial. First, I'll check if there are any rational roots using the rational root theorem. 

The possible rational roots are the factors of 120 divided by the factors of 1, which means the possible rational roots are: $\pm 1, \pm 2, \pm 3, \pm 4, \pm 5, \pm 6, \pm 8, \pm 10, \pm 12, \pm 15, \pm 20, \pm 24, \pm 30, \pm 40, \pm 60, \pm 120$.

Let me test some of these:
For $x = 1: 1 + 5 - 74 + 120 = 52 \neq 0$
For $x = -1: -1 + 5 + 74 + 120 = 198 \neq 0$
For $x = 2: 8+20-148+120=0$ \checkmark

So $x = 2$ is a root! This means $(x - 2)$ is a factor.

Now I'll perform polynomial division to find the other factor:
$x^3 + 5x^2 - 74x + 120 = (x - 2)(x^2 + a x + b)$

Expanding the right side:
$(x - 2)(x^2 + a x + b) = x^3 + a x^2 + b x - 2x^2 - 2a x - 2b = x^3 + (a-2)x^2 + (b-2a)x - 2b$

Comparing coefficients:
\begin{itemize}
\item Coefficient of $x^2$: $a - 2 = 5$, so $a = 7$
\item Coefficient of $x$: $b - 2a = -74$, so $b - 14 = -74$, thus $b=-60$
\end{itemize}

I'll continue factoring $x^2 + 7x - 60$ by finding two numbers that multiply to $-60$ and add to 7. The numbers 12 and $-5$ work perfectly, since $12 \times (-5) = -60$ and $12 + (-5) = 7$.
This means I can split the quadratic into $(x + 12)(x - 5)$.

Combining all factors gives $x^3 + 5x^2 - 74x + 120 = (x - 2)(x + 12)(x - 5)$, revealing the zeros at $x = 2$, $x = -12$, and $x = 5$. The largest zero is 5.
}
\end{example}

Clearly, this chain of thought is acting as some kind of computational device. However, it is nothing resembling a program since the results of the computation are intermixed with instructions (code) and comments, e.g.,
\[
\text{For } x=1, \underbrace{1 + 5 - 74 + 120}_{\text{code}} = \underbrace{52}_{\text{result}}
\]
It is also not the trace of execution of an underlying algorithm, since the code is generated dynamically based on the output of previous operations. And, unlike standard programs, the particular CoT tokens are not necessarily meaningful: just outputting dots ``$\ldots$'' may still lead to the correct result \cite{pfau2024let}.

A more sound view is that an autoregressive LLM perfroming CoT is a stochastic dynamical system, where the prompt represents the initial state, the trained model backbone the transition probability, and each CoT trace is a sample trajectory. CoT \textit{performs computation} in the sense that, starting from an initial state, the system evolves until it reaches a \textit{terminating} state (one where the network is confident it can answer), at which point it outputs a final answer.

Of course, it is well understood that a (deterministic or stochastic) dynamical system can solve computational tasks (Deterministic Finite Automata, Turing Machines, Game of Life, etc.). But LLM systems are quite peculiar: they were not designed to solve a specific task, but rather aim to be  \textit{universal solvers}: given a description of a task, the system should be able to find a solution. Moreover, rather than brute-forcing a solution, we expect it to find the fastest way to solve the problem, as well as accessing past information (e.g., the  ``rational root theorem'' in the example above) to significantly speed up the solution.

\newcommand{\terminates}{\Rightarrow}
\newcommand{\E}{\mathbb{E}}
\renewcommand{\P}{\mathbb{P}}

\subsection{Notation}

In this section we introduce the notation used throughout the paper.\footnote{For those with background in dynamical system theory, in \Cref{app:notation} we map the notation to one more reminiscent of continuous dynamical systems.}
Let $s \in \mathcal{S}$ be a state in a potentially infinite state space $\mathcal{S}$, and let $t \in \mathbb{N}$ be a time index. A sequence of states $h=(s_1, \ldots, s_n)$ is called a trajectory or path. Its length is the time $T(h) = n$. A stochastic dynamical system is defined by the transition probability $\nu(s_{t+1}| s_t)$. We say $h$ is a trajectory between two states $u, v \in \mathcal{S}$ if $s_1 = u$ and $s_n = v$. The probability $\nu(h) = \prod_{t=1}^{n-1} \nu(s_{t+1}|s_t)$ of a trajectory is the product of the transition probabilities along the path.

The system should be able to read inputs and output answers. Let $\Sigma$ be the input/output alphabet.
We assume that the system has a set $\mathcal{F} \subset \mathcal{S}$ of \textit{terminating} states, and a function $\text{dec}: \mathcal{F} \to \Sigma^*$ that, given a terminal state, generates an output. We also assume that there is a function $\text{enc}: \Sigma^* \to \mathcal{S}$ that encodes the input into a state of the dynamical system, where $\Sigma^*$ is the set of all possible finite trajectories. We say that a trajectory $h=(h_1, \ldots, h_t)$ terminates with output $a$ --- which we write $h \terminates a$ ---
if $h_t \in \mathcal{F}$ and $\text{dec}(h_t)=a$.

Let $x \in \Sigma^*$ be an input. For simplicity, we assume that all trajectories starting from $\text{enc}(x)$ and ending in a terminating state terminate with the same output, or with a special \texttt{<error>} token. This allows us to write $\nu(x \terminates a)$ meaning the dynamical system $\nu$ starting from $\text{enc}(x)$ terminates with $a$. While this is generally restrictive, we mainly study settings where the answer is verifiable, in which case we can trivially return error if the output is not correct.\footnote{
  A more standard and less restrictive definition is to ask that the answer is correct at better than chance level, that is we would say $\nu(x \terminates a)$ if $\mathbb{P}[h \terminates a | h_1 = \text{enc}(x)] > 2/3$.
}

\begin{example}{}{}
Two key systems we are interested in are:

\textbf{LLMs.} The state is the set of activations of the LLM after reading some tokens. This is the set of activations (known as the key-value (KV) cache) for an autoregressive Transformer, or more generally the hidden state for a State Space Model. The transition function $\nu(s_{t+1}|s_t)$ generates the next token given the state $s_t$ and uses it as input to generate state $s_{t+1}$.
The final states are the states at which the LLM outputs an \texttt{<end\_of\_thought>} token. The decoding function consists of letting the network generate the answer after \texttt{<end\_of\_thought>}.
The encoding function simply lets the LLM read the input tokens to update its initial state. 

\textbf{Turing Machines.} The state of a Turing machine is the content of its tape at a given time, plus its internal state. The transition function $\nu(s_{t+1}|s_t)$ updates the tape and its internal state as usual, either deterministically in a standard Turing machine, or randomly in a probabilistic machine.
\end{example}

We also make use of several notions from algorithmic information theory \cite{li2008introduction}. Let $x$ be a string, its \textit{Kolmogorov Complexity} $K(x)$ is defined as the length (in bits) of the shortest program that terminates outputting $x$. Given two strings $x$ and $y$, their \textit{algorithmic mutual information} is $I(x : y) := K(x) + K(y) - K(x, y) = K(x) - K(x|y)$ (up to logarithmic additive terms). This can be interpreted as how much more $x$ can be compressed if we have already observed $y$. Recall that, by Shannon's Coding Theorem, given any probability distribution $\nu(x)$ over binary strings, there is a corresponding encoding algorithm that encodes a string $x$ in $\ell_\nu(x) := - \log_2 \nu(x)$ bits.

\subsection{Proper Time}

As we have anticipated, transductive learning is about solving generic tasks quickly. But how do we define the \emph{time} that a \emph{stochastic} system needs to find a solution to the task? The question is subtle, since if we look at the length of a particular sampled trajectory, randomness can make an algorithm look arbitrarily faster or slower, without changing what it effectively computes.

Let us first consider a motivating example. Let $f(x)$ be a function that is easy to evaluate, but can be inverted only through brute-force search (i.e., a `one-way function'). Given $y$, the task is to find a binary string $x$ of length $|x| = n$ such that $y=f(x)$. A \emph{deterministic} Turing machine must try all $2^n$ candidates for $x$, so the total expected time is $T=2^{n-1}$. On the other hand, a \emph{stochastic} machine can \emph{guess} the first $k$ of $x$, and brute-force the remaining $n-k$, so every terminating trajectory has length $T=2^{n-k}$, but occurs with probability only $\nu(h) = 2^{-k}$. 

\begin{figure}[h]
\centering
\includegraphics[width=0.7\linewidth]{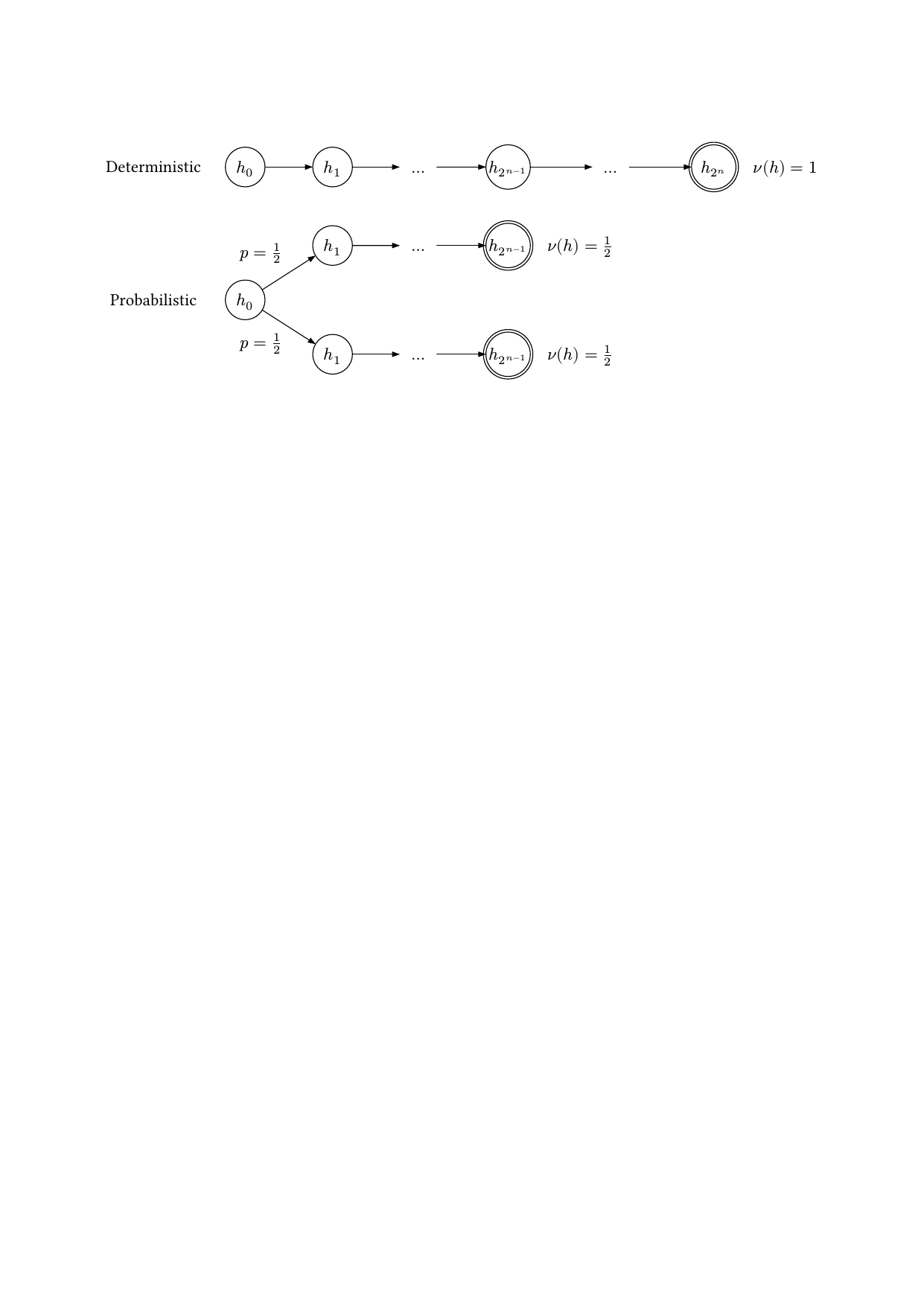}
\end{figure}

The probabilistic machine can then be arbitrarily faster than the deterministic machine as measured by the trajectory length $T(h)$ even if, effectively, both are doing the same brute force search. If however we consider the ratio $\tau(h) = T(h)/\nu(h) = 2^n$, we see that it remains constant, no matter how we branch the computation path: randomness shortens paths but also makes them rarer. This invariance suggests the following definition of “proper” time for single-trajectory targets.

\begin{definition}{Proper Time}{proper-time}
  Let $\nu(h_{t+1}|h_t)$ be a stochastic dynamical system. Define the \emph{proper time} to reach $v$ from $u$ as:
  $$
    \tau_\nu (u \to v) = \min_{h_{u \to v}} \frac{T(h)}{\nu(h|u)}
  $$
  where the minimum is over all trajectories $h$ from $u$ to $v$, or $\tau_\nu = \infty$ if no trajectories exist. For an input–output specification, the proper time to terminate from input $x$ with output $a$ is:
    $$
      \tau_\nu (x \terminates a) = \min_{h \terminates a \text{ and } h_1 = \text{enc}(x)} \frac{T(h)}{\nu(h|\text{enc}(x))},
    $$
    where the minimum is over trajectories starting from $\text{enc}(x)$ and terminating with output $a$.
\end{definition}

This definition is closely related to Levin Complexity \cite{vitanyi_algorithmic} and its extension to tree search \cite{orseau2018single}.
If the system is deterministic, then $\nu(h|u) = 1$ for any path, and $\tau_\nu$ reduces to standard running time. Conversely, we now show that $\tau_\nu$ indeed captures the actual computational effort required by a stochastic dynamical system when simulated deterministically.

\begin{theorem}{Dynamical Systems $\Rightarrow$ Turing Machines with same $\tau$}{levin}
Let $\nu(s_{t+1}|s_t)$ be a dynamical system. There is a deterministic Turing machine $M_\nu$, with access to an oracle to compute $\nu(\cdot | \cdot)$, such that:
$$
  T_{M_\nu} (x \terminates a) \leq 2 \, \tau_\nu (x \terminates a) 
$$
\end{theorem}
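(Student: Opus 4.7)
The plan is to construct $M_\nu$ as a deterministic Levin-style \emph{universal search} over the trajectory tree of $\nu$, organized by proper time. Concretely, $M_\nu$ performs iterative deepening on a proper-time budget $B_k = 2^k$: at iteration $k$ it runs a depth-first traversal rooted at $\text{enc}(x)$, maintaining along each explored prefix $h$ the values $T(h)$ and $\nu(h)$ (the latter incrementally via the oracle). At each node $M_\nu$ either (i) halts and returns if the current state lies in $\mathcal{F}$ with $\text{dec}(\cdot)=a$; (ii) prunes and backtracks if $T(h) > B_k\,\nu(h)$; or (iii) otherwise recurses into every successor $s'$ with $\nu(s'\mid\cdot)>0$.

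The crux is a per-iteration cost bound: one pass of the DFS at budget $B$ uses at most $B$ transitions. Let $F_B$ denote the frontier of the exploration, i.e.\ the maximal explored prefixes (each either pruned on the budget condition or a matching terminal state). Because the DFS halts at a prefix exactly when either condition first occurs, $F_B$ is a prefix-free set of $\nu$-paths, so a Kraft-style / stopping-time inequality yields $\sum_{h\in F_B}\nu(h)\le 1$. Each $h\in F_B$ satisfies $T(h)\le B\,\nu(h)$ by the budget condition, and the total DFS edge count is bounded by $\sum_{h\in F_B}T(h)$ since every internal edge can be charged to some descendant leaf, hence
\[
\text{cost}(B) \;\le\; \sum_{h\in F_B} T(h) \;\le\; B\sum_{h\in F_B}\nu(h) \;\le\; B.
\]

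By definition of proper time there is a trajectory $h^*$ from $\text{enc}(x)$ to a terminal state with output $a$ achieving $T(h^*) = \tau^*\,\nu(h^*)$, where $\tau^* := \tau_\nu(x\terminates a)$. As soon as $B_k\ge\tau^*$ this $h^*$ is not pruned and the DFS exposes it, so $M_\nu$ halts at the first $k^*$ with $B_{k^*}\ge\tau^*$, and doubling guarantees $B_{k^*}\le 2\tau^*$. Summing per-iteration costs by the geometric series, or (for the sharp constant stated) by caching the explored subtree across iterations so that only the incremental frontier is expanded at each step, yields the claimed bound $T_{M_\nu}(x\terminates a) \le 2\,\tau_\nu(x\terminates a)$. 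The main technical obstacle is securing the antichain / Kraft-style estimate $\sum_{h\in F_B}\nu(h)\le 1$ together with the charging of internal DFS work against frontier leaf lengths; once these invariants are in place, the iterative-deepening / geometric-summation scheme of Levin's classical universal-search construction ports directly from programs to arbitrary stochastic trajectories.
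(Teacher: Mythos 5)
Your construction is correct in spirit but takes a genuinely different algorithmic route from the paper. The paper reduces the theorem to the Levin Tree Search lemma of Orseau et al.: a \emph{best-first priority search} that always extends the frontier path $h'$ minimizing $T(h')/\nu(h')$, and argues by induction on the search tree that the target path enters the frontier within $2\tau_\nu$ expansions. You instead run \emph{iterative deepening} on a proper-time budget $B_k=2^k$, pruning any prefix with $T(h)>B_k\,\nu(h)$ --- the classic Levin-search scheduling, transplanted from program dovetailing to trajectory trees. The combinatorial heart is the same in both (an antichain/Kraft bound $\sum\nu(h)\le 1$ over maximal explored prefixes, plus charging internal nodes to descendant leaves), but your version makes that counting explicit where the paper only gestures at ``an induction on the search tree,'' and it is arguably easier to implement on a Turing machine with a $\nu$-oracle since it needs no priority queue. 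The price is the constant: naive doubling gives a geometric sum and hence a factor $4$, and you correctly note that recovering the stated factor $2$ requires caching the explored subtree so that only the final budget's work counts; the best-first order gets $2\tau_\nu$ directly.

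One step in your cost bound is written incorrectly, though it is repairable. You define the frontier $F_B$ to include the \emph{pruned} leaves and then assert that every $h\in F_B$ satisfies $T(h)\le B\,\nu(h)$ ``by the budget condition'' --- but pruned leaves satisfy the \emph{reverse} inequality $T(h)>B\,\nu(h)$; that is why they were pruned. The chain $\sum_{h\in F_B}T(h)\le B\sum_{h\in F_B}\nu(h)\le B$ therefore does not go through as stated. The fix is to charge internal work to the antichain of \emph{maximal non-pruned} prefixes (terminal leaves and deepest surviving nodes), each of which does satisfy $T(h)\le B\,\nu(h)$, and to account separately for the one oracle call spent rejecting each pruned child --- which is bounded by the branching factor times the number of expansions, a cost the paper's own node-counting convention also elides. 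With that correction, and the observation (which you use implicitly) that every prefix $g$ of the optimal path $h^*$ satisfies $T(g)\le T(h^*)=\tau^*\nu(h^*)\le \tau^*\nu(g)$ and so is never pruned once $B_k\ge\tau^*$, your argument is sound.
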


The theorem follows directly by taking $M_\nu$ to be the Turing Machine that implements the algorithm in the following key lemma:

\begin{lemma}{Levin Tree Search \cite{orseau2018single}}{proper-time-lemma}
Let $u, v$ be two states. There is a deterministic algorithm $A$ that discovers a path between them (if it exists) while visiting at most $T$ nodes where
$$
  T = 2 \, \tau_\nu (u \to v)
$$
\end{lemma}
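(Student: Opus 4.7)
My plan is to adapt Levin's universal search to this stochastic setting by running an iterative-deepening procedure driven by the oracle access to $\nu$, and then charge the work via a Kraft-type inequality over trajectories.

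The algorithm I would propose runs in phases indexed by a budget $T \in \{1, 2, 4, \ldots\}$. In phase $T$, I deterministically enumerate finite trajectories $h = (h_1, \ldots, h_k)$ starting at $h_1 = u$ (e.g., by enumerating random-choice sequences in increasing self-information $-\log \nu(\cdot)$, or equivalently by a best-first expansion with priority $|h|/\nu(h)$). For each $h$, I simulate it step by step using the oracle to read off $\nu(h_{t+1}|h_t)$ and maintain the running product $\nu(h) = \prod_{t} \nu(h_{t+1}|h_t)$. The simulation of $h$ is aborted as soon as $|h|/\nu(h) > T$; otherwise I check whether $h_k = v$ and return $h$ on success. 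If no trajectory in this phase reaches $v$, I double $T$ and repeat.

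The key step is the per-phase work bound. In phase $T$, the set of simulated trajectories forms a prefix-closed subtree of the reachable tree from $u$, whose leaves are either aborted trajectories or ones that just reached $v$. The leaves are prefix-free as paths of the dynamical system, so $\sum_{h \text{ leaf}} \nu(h) \le 1$. By the abort rule, each such leaf satisfies $|h| \le T\,\nu(h)$, and charging each internal visit to the leaf below it yields $\sum_h |h| \le T \sum_h \nu(h) \le T$. Thus each phase visits at most $T$ nodes. Correctness of doubling is immediate: as soon as $T \ge \tau_\nu(u \to v)$, the optimal trajectory $h^*$ realizing the minimum in \Cref{def:proper-time} satisfies $|h^*| \le T\,\nu(h^*)$, so my enumeration simulates $h^*$ to completion and returns it. A geometric sum over phases, carefully charging only the first phase in which $T \ge \tau_\nu$ (since the previous phase had $T < \tau_\nu$), gives total work $\le 2\,\tau_\nu$.

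The main obstacle, and the reason the naive analysis is loose, is that one must charge work per simulated trajectory rather than per expanded tree edge: a depth-wise count at each depth $t$ only yields at most $\tau/t$ nodes with $\nu \ge t/\tau$, which sums to $O(\tau \log \tau)$ across depths and loses the tight constant. The trajectory-level accounting via $\sum_h |h| \le T \sum_h \nu(h)$ is what recovers the stated $2\tau_\nu$ bound, and it is exactly the Kraft-inequality mechanism behind classical Levin search, transposed from prefix-free programs to prefix-free trajectories of the dynamical system.
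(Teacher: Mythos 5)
Your core mechanism is the right one, and you actually make explicit the step the paper's sketch leaves implicit: the terminal trajectories of a truncated search tree are prefix-free, so $\sum_{h\ \mathrm{leaf}} \nu(h) \le 1$, and combining this with the budget condition $T(h) \le T\,\nu(h)$ bounds the total number of visited nodes by $T$. This Kraft-inequality accounting is exactly what powers the paper's ``induction on the search tree.'' Where you diverge is the algorithmic wrapper: the paper runs a \emph{single} best-first priority search, always extending the frontier path whose one-step extension minimizes $T(h')/\nu(h')$, so that every node expanded before $h^*$ is found has priority at most $\tau_\nu(h^*)$ and the Kraft bound is applied once. You instead use iterative deepening with doubling budgets, which re-does work across phases, and this costs you the stated constant: the first successful phase has budget $T^* \in [\tau_\nu, 2\tau_\nu)$, and summing $1 + 2 + \cdots + T^*$ gives total work up to $2T^* - 1$, which can approach $4\tau_\nu$, not $2\tau_\nu$. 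There is also a sign slip in your per-phase bound: an \emph{aborted} leaf satisfies $T(h)/\nu(h) > T$, the reverse of the inequality $T(h) \le T\,\nu(h)$ you invoke; the inequality holds for the deepest non-aborted prefix, and the aborted nodes must be charged separately (one extra visit per expansion --- harmless asymptotically, but it further erodes the constant). Both issues disappear if you fold your Kraft argument into the paper's single-pass best-first search rather than the doubling scheme; as written, your argument establishes the lemma with $T = O\bigl(\tau_\nu(u \to v)\bigr)$ but not with the constant $2$ claimed in the statement.
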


\begin{proof}[Proof sketch]
We sketch the argument underlying Levin-Tree Search \cite{orseau2018single}.
For a partial trajectory (search-tree node) $h=(u=s_0,s_1,\dots,s_t)$ define its Levin cost
\[
\tau_\nu(h)\;:=\;\frac{\ell(h)}{\nu(h\mid u)}\,,\qquad 
\nu(h\mid u)=\prod_{i=0}^{t-1}\nu(s_{i+1}\mid s_i)\,,\quad \ell(h)=t .
\]
The algorithm maintains a frontier of unexpanded prefixes and repeatedly expands the prefix having minimum $\tau_\nu$.  Along any root-to-leaf branch, $\tau_\nu$ is non-decreasing (depth increases while probability only decreases), which implies a best-first property: all prefixes with $\tau_\nu \le c$ are expanded before any prefix with $\tau_\nu>c$, where $c:=\min_{h:u\to v}\tau_\nu(h)$ is the optimal goal cost. Let $T_c$ be the (finite) search tree consisting of all expanded nodes at the moment the first goal node is expanded.  Every leaf $h$ of $T_c$ satisfies $\tau_\nu(h)\le c$, hence $\ell(h)\le c\,\nu(h\mid u)$.  Moreover, the number of expanded nodes is at most the sum of the depths of leaves,
$|N(T_c)|\le \sum_{h\in L(T_c)}\ell(h)$, since each leaf contributes at most one count to each of its ancestors.  Therefore
\[
|N(T_c)|\;\le\;\sum_{h\in L(T_c)}\ell(h)\;\le\;c\sum_{h\in L(T_c)}\nu(h\mid u)\;\le\;c,
\]
using that leaf probabilities in a prefix tree sum to at most $1$.  Thus LevinTS reaches $v$ after at most $c=\min_{h:u\to v}\ell(h)/\nu(h\mid u)$ node expansions; translating expansions to visited/generated nodes yields the stated $O(2c)$ bound.
\end{proof}

Since all computation today is executed on deterministic logic hardware, \Cref{theorem:levin} validates $\tau_\nu$ as the ``proper'' way to measure time for a stochastic dynamical system.\footnote{The name has a loose analogy with relativistic \textit{proper time}: like $\tau = t^2 - x^2$, our $\log \tau_v = T - \log v(h)$ mixes temporal and the probabilistic `distances,' providing a representation-invariant clock.}
It also frames $\tau$ as a fundamental property of the algorithm we are executing, rather than a function of the stochasticity of its implementation.

\begin{remark}{}{}
It is useful to compare proper time with other candidate measures of computational cost. Consider a system that, with high probability $1 - \epsilon$, terminates in a short number $T_0$ of steps but, with small probability $\epsilon$, enters an infinite, non-terminating, chain where every transition has probability $\nu(s_{i+1}|s_i) = 1$. If we sample a random trajectory, the \textit{expected time} $\E[T]$ is infinite: even though the non-terminating trajectory is entered with low probability, its infinite length dominates the expectation.\footnote{The same conclusion holds even when all trajectories are forced to terminate. For example, suppose a system has probability $p_n = \epsilon\, 2^{-n}$ of entering a trajectory of length $T_n = 2^n$. Then $\E[T] = (1-\epsilon)\, T_0 + \epsilon \sum_n 1 = \infty$.}
Likewise, the Levin-style quantity $\E\!\left[\frac{T(h)}{\nu(h \mid u)}\right]$ diverges: along the non-terminating chain, the trajectory probability stays bounded away from zero (since each transition has probability one), yet the time grows without bound, so their ratio diverges.

The proper time $\tau_\nu(x \terminates a) = \frac{T_0}{1-\epsilon}$, by contrast, remains bounded and close to $T_0$. Intuitively, proper time discounts trajectories by their probability of being reached, rather than conditioning on having entered them, and so it is not held hostage by low-probability pathological paths.

This distinction has a practical consequence: to harness the computational power of a stochastic system, it is not enough to sample trajectories from it. Instead, a dovetailing strategy, as in \Cref{lemma:proper-time-lemma}, should be used (in the next section, we also show that a sampling approach with a suitable restart schedule can achieve close to optimal time). More broadly, this highlights that a standalone LLM is not itself an optimal solver, since its time to terminate can be arbitrarily large. Rather, an optimal solver \textit{agent} is constructed from the LLM by wrapping it in an appropriate search procedure.
\end{remark}

In a deterministic system, the time (path length) between states is a distance. Similarly, the following theorem establishes that for a path $x \to y \to z$, the proper time to go from $x \to z$ cannot be greater than the time it takes to first go to $y$ and then to $z$. It will play an important role in multiple proofs.

\begin{lemma}{Proper Time is submultiplicative}{submultiplicative}
Let $x, y, z$ be three states. Then:
$$
  \tau_\nu (x \to z) \leq \tau_\nu (x \to y) \cdot \tau_\nu (y \to z)
$$
\end{lemma}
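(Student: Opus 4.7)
The plan is to prove submultiplicativity by explicit concatenation of optimal trajectories, reducing the statement to an elementary inequality about path lengths. The only structural ingredient needed is the Markov property of $\nu$; the rest is bookkeeping.

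First, I would pick a trajectory $h_1$ from $x$ to $y$ that realizes $\tau_\nu(x \to y) = T(h_1)/\nu(h_1 \mid x)$ and a trajectory $h_2$ from $y$ to $z$ that realizes $\tau_\nu(y \to z) = T(h_2)/\nu(h_2 \mid y)$, writing $T_1, T_2$ for their lengths and $\nu_1, \nu_2$ for their probabilities. Form the spliced trajectory $h := h_1 \oplus h_2$ that follows $h_1$ from $x$ up to $y$ and then continues with $h_2$, with the shared endpoint $y$ counted once. Two facts about $h$ are key: the length telescopes as $T(h) = T_1 + T_2 - 1$, and by the Markov property of the stochastic dynamical system, the probability factorizes along the splice,
\[
\nu(h \mid x) = \nu(h_1 \mid x)\,\nu(h_2 \mid y) = \nu_1 \nu_2.
\]
Since $h$ is one particular trajectory from $x$ to $z$ and $\tau_\nu(x \to z)$ is defined as the minimum of $T(\cdot)/\nu(\cdot \mid x)$ over all such trajectories,
\[
\tau_\nu(x \to z) \;\le\; \frac{T(h)}{\nu(h \mid x)} \;=\; \frac{T_1 + T_2 - 1}{\nu_1 \nu_2}.
\]

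It then remains to observe the elementary inequality $T_1 + T_2 - 1 \le T_1 T_2$, equivalent to $(T_1 - 1)(T_2 - 1) \ge 0$, which is automatic because every trajectory has length at least $1$. Dividing by $\nu_1 \nu_2$ gives
\[
\tau_\nu(x \to z) \;\le\; \frac{T_1 T_2}{\nu_1 \nu_2} \;=\; \tau_\nu(x \to y) \cdot \tau_\nu(y \to z),
\]
which is the claim. The main (mild) obstacles are just conventions: one must make sure the $-1$ from splicing at $y$ is properly accounted for (so that lengths add correctly), and if in some model the infimum in \Cref{definition:proper-time} is not attained one instead picks $\varepsilon$-optimal paths $h_1, h_2$, runs the same argument, and sends $\varepsilon \to 0$. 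No deeper structural argument beyond the Markov factorization is required.
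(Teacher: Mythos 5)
Your proof is correct and follows essentially the same route as the paper's: concatenate the two optimal trajectories at $y$, bound $\tau_\nu(x\to z)$ by the ratio for the spliced path, and finish with an elementary inequality relating the sum of lengths to their product. Your careful accounting of the shared endpoint (giving $T_1+T_2-1\le T_1T_2$, i.e.\ $(T_1-1)(T_2-1)\ge 0$, which holds unconditionally) is a small improvement over the paper's version, which uses $T_1+T_2\le T_1T_2$ and therefore needs a separate check for the degenerate cases where states coincide and a path length could be $1$.
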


\begin{proof}
Let $h_{x \to y}$ and $h_{y \to z}$ be paths that realize the minimum in the definition of $\tau$.
We can construct the path $h_{x \to z} = h_{x \to y} \circ h_{y \to z}$ composing the two paths. By \Cref{definition:proper-time} we have:
\begin{align}
  \tau_\nu (x \to z) &\leq \frac{T(h_{x \to z})}{\nu(h_{x \to z})} 
  = \frac{T(h_{x \to y}) + T(h_{y \to z})}{\nu(h_{x \to y}) \nu(h_{y \to z})} \\
  &{\leq} \frac{T(h_{x \to y}) \, T(h_{y \to z})}{\nu(h_{x \to y}) \,\nu(h_{y \to z})} \label{eq:*}\\
  &= \tau_\nu (x \to y) \, \tau_\nu (y \to z)
\end{align}
Where in the second line \eqref{eq:*} we used the fact that $T_1 + T_2 \leq T_1 T_2$ whenever $T_1 \geq 2$ and $T_2 \geq 2$ which is automatically satisfied when the states are distinct. If two or more states are the same, the property can be easily checked by hand.
\end{proof}

This also implies that $\log \tau(x \to z) \leq \log \tau(x \to y) + \log \tau(y \to z)$, which makes $\log \tau$ an \textit{asymmetric distance} between states. Note that $\tau$ is sub-multiplicative, while deterministic time is sub-\emph{additive}. This is because in a stochastic system, time may be dominated by the time to find a suitable combination of paths, and probability of the composition of two paths is a product.

These two lemmas suffice to prove the key theorems in the rest of this work, including providing a straightforward construction for a generalization of Solomonoff-Levin Universal Search Algorithm \cite{levin1973universal,solomonoff1984optimum}.

\subsection{Multiple successful paths}

The quantity in \Cref{definition:proper-time} measures the cost to \textit{uncover a particular trajectory}. Many tasks, however, accept \textit{any} trajectory leading to one of the final states $s \in \mathcal{F}$. In that setting, multiple distinct paths can succeed, and the right notion aggregates their probabilities. Let
$$
F_\nu(t;\mathcal{F})
:= 
\Pr_\nu\bigl(\text{reach }\mathcal{F}\text{ within } t \text{ steps}\bigr)
$$
be the \textit{success-by-time} curve. If we run independent trials of length $t$ (restarting after $t$ steps), we need in expectation $1/F_\nu(t;\mathcal{F})$ trials for one success, for total expected work $t/F_\nu(t;\mathcal{F})$. Optimizing over the cutoff gives a canonical baseline:
$$
\tau_\nu^*(\mathcal{F})
:=
\inf_{t\ge 1}\;
\frac{t}{F_\nu(t; \mathcal{F})}.
$$
This general notion (i) strictly improves over any single-path bound when many solutions exist, and (ii) collapses to the proper time when there is effectively one successful path. In principle, to simulate the stochastic system in total time $\tau_\nu^*(\mathcal{F})$ we would need the unknown optimal cutoff $t$. However, universal Luby-type restart schedules \cite{luby1993optimal,orseau2018single} achieve expected work within a logarithmic factor of the optimum fixed-cutoff policy:
\[
\text{Expected work }=\; O \bigl(\tau_\nu^*(\mathcal{F})\,\log \tau_\nu^*(\mathcal{F})\bigr).
\]
Thus $\tau_\nu^*(\mathcal{F})$ characterizes intrinsic difficulty `up to logs.'
For clarity of exposition, in the rest of the paper we focus on $\tau_\nu(h)$, but all results extend naturally to $\tau_\nu^*(\mathcal{F})$.

\subsection{Universal Dynamical Systems}

A key part of defining a computation system is the existence of universal systems. Generally, we want a system in the class to be able to simulate any other system in the same class. However, since time is a key quantity for us, we need to ensure the time to simulate is similar to the original time.

\begin{definition}{Linear-Time Universal Dynamical System}{linear-time-universal}
Let $\nu$ be a dynamical system, and let $[\nu]$ be its encoding. We say that dynamical system $U$ is \textit{linear-time universal} if for any $\nu$ we have:
\[
  U(x[\nu] \terminates a ) \Leftrightarrow  \nu(x \terminates a) \text{ and } \tau_U (x \terminates a) \leq C_\nu \tau_\nu (x \terminates a)
\]
for some constant $C_\nu$ which depends on $nu$ but not on the input $x$.
\end{definition}

Since for any dynamical system $\nu$ there is a Turing machine $M_\nu$ which simulates $\nu$ in the same proper time, to satisfy the definition it is enough to verify that for any Turing machine $M$ we have:
\begin{equation}
\label{eq:turing-universal}
   U(x[M_\nu] \terminates a ) \Leftrightarrow  M(x) = a \text{ and } \tau_U (x \terminates a) \leq C_M T_M (x \terminates a)
\end{equation}
which is generally easier to verify. This observation trivially gives us:
\begin{corollary}{}{}
  There exists a Linear-Time Universal Dynamical System.  
\end{corollary}
\begin{proof}
    Any linear-time universal Turing machine is a dynamical system and by definition can emulate other Turing machines in linear time.
\end{proof}

\section{Universal Solvers}

We are now finally ready to introduce Universal Solvers, which are our main focus. A universal solver is a dynamical system that can efficiently find a solution to an arbitrary problem, if one exists. We formalize it as follows.

Let $f(x, y): X \times Y \to \{0,1\}$ be a computable function. We say that $y$ is a witness of $x$ if $f(x, y)=1$.
A universal search program is any program $S$ that, provided with an oracle for $f$ and an input $x$, terminates with outputting $y$ such that $f(x,y)=1$ (we generalize this to continuous rewards in \Cref{sec:continuous-reward}): 
$$
  S(x \terminates y) \iff f(x, y)=1
$$
If $y$ does not exist, the program is allowed to terminate with an error or not terminate at all.
Generally, together with the input $x$ we may also pass a description of the objective $f(x,y)$ so the search program is not blind. To keep the notation uncluttered, we do not denote this additional input.

It is always possible to find a witness to any problem by just enumerating all possible $y$ in a dovetail fashion and checking for $f(x,y)=1$ using the oracle for $f$. However, we are interested in search programs that are as efficient as possible.

\begin{definition}{Universal Solver}{universal-solver}
  A dynamical system $U$ is a \emph{universal solver} system if, for any objective $f(x,y)$ and any other system $A$ that solves the problem --- i.e.,  for all $x$, $A(x \terminates y)$ with $f(x,y)=1$ --- we have:
  $$
    \tau_U (x \terminates y) \leq C_A \tau_A (x \terminates y).
  $$
\end{definition}

That is, for any task, a universal solver is at most a constant factor slower than the best possible system $A$ that solves that particular task. The existence of a universal solver is non-trivial. Levin introduced the notion of universal search, as well as a sketch of the existence of such a system in the same paper that introduced the notion of NP-Complete problems \cite{levin1973universal}. Solomonoff later realized its importance for machine learning, and provided a detailed proof \cite{solomonoff1984optimum}. With the formalism we introduced, the existence proof is straightforward and can be generalized to any stochastic system, with Turing Machines as a special case.

\begin{theorem}{Existence of Dynamical System Universal Solvers}{universal_search}
  Let $m$ be any distribution encoding programs from which we can sample. Then, there is a dynamical system $U_m$ such that for any solver $A$:
  \[
    \tau_{U_m} (x \terminates y ) \leq C'_A 2^{-\log m(A)} \tau_A (x \terminates y).
  \]
  In particular, $U_m$ is a universal solver with constant $C_A = C'_A 2^{-\log m(A)}$.
\end{theorem}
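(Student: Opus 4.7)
The plan is to construct $U_m$ as an explicit two-stage dynamical system: a \emph{sampling stage} that emits a description of a candidate solver $A$ with probability $m(A)$, followed by a \emph{simulation stage} that faithfully runs the sampled $A$ on the input $x$ while using the oracle to verify any produced output. Concretely, after reading $\text{enc}(x)$, $U_m$ enters a sub-system that generates a prefix-code description of some program $A$ along a trajectory of length $\ell_m(A)\le -\log m(A)+O(1)$ and probability exactly $m(A)$; it then hands control to a universal simulator sub-system whose transition kernel is the same for all $A$ and which reproduces the stochastic dynamics of $A$ step by step; each candidate output $y$ is passed to the oracle for $f$, and the state is marked terminating precisely when $f(x,y)=1$, with $\text{dec}$ returning $y$.

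Next I would fix an arbitrary solver $A$ with $A(x\!\terminates\!y)$ and pick a single trajectory $h_A^\star$ of $A$ realizing the minimum in \Cref{definition:proper-time}, so that $T(h_A^\star)/\nu_A(h_A^\star\mid x)=\tau_A(x\!\terminates\!y)$. Composing the sampling trajectory for $A$ with the simulated copy of $h_A^\star$ (plus an $O(1)$ verification tail) yields a trajectory $h_U$ in $U_m$ that terminates with output $y$, has length at most $\ell_m(A)+T(h_A^\star)+O(1)$, and has probability exactly $m(A)\,\nu_A(h_A^\star\mid x)$. Plugging into \Cref{definition:proper-time},
\[
\tau_{U_m}(x\!\terminates\!y)\;\le\;\frac{\ell_m(A)+T(h_A^\star)+O(1)}{m(A)\,\nu_A(h_A^\star\mid x)}.
\]
Exactly as in the proof of \Cref{lemma:submultiplicative}, $T_1+T_2\le T_1T_2$ once both are $\ge 2$, so the numerator is bounded by $O(\ell_m(A))\cdot T(h_A^\star)$, giving
\[
\tau_{U_m}(x\!\terminates\!y)\;\le\;\frac{O(\ell_m(A))}{m(A)}\,\tau_A(x\!\terminates\!y)\;=\;C_A'\,2^{-\log m(A)}\,\tau_A(x\!\terminates\!y),
\]
with $C_A':=O(\ell_m(A))$, as claimed. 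Alternatively, one can avoid any sub-multiplicativity estimate by invoking \Cref{lemma:submultiplicative} directly on the split $\text{enc}(x)\to(\text{sampled }A,\,\text{enc}(x))\to\text{terminating state}$, since the proper time of the first leg is at most $\ell_m(A)/m(A)$ by construction.

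The main obstacle is the \emph{simulation stage}: $U_m$ must be a single fixed transition kernel that can reproduce the stochastic dynamics of an \emph{arbitrary} $A$ after reading its description, while preserving the ratio $T(h)/\nu(h\mid x)$ up to a multiplicative factor that can be absorbed into $C_A'$. This is the counterpart of invoking a universal Turing machine in Levin's classical argument. The cleanest route is to build $U_m$ on top of the deterministic simulator provided by \Cref{theorem:levin}, lifted back into a stochastic dynamical system in which sampling the random coins of $A$ is part of the kernel, so that each simulated trajectory of $A$ corresponds to a trajectory of $U_m$ with the same length (up to a constant per step) and the same probability. Once the simulator is in place, the trajectory-composition argument above is routine; the delicate point is precisely that the constant of simulation, together with the code length $\ell_m(A)$, is what ends up hidden inside $C_A'$, and that verification through the oracle for $f$ contributes only $O(1)$ deterministic steps with no probabilistic overhead.
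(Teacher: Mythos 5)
Your proposal is correct and follows essentially the same route as the paper: a guess-then-execute construction (sample a description of $A$ from $m$, then hand off to a universal simulator), with the bound obtained by composing the two legs via \Cref{lemma:submultiplicative} and absorbing the simulation overhead into $C'_A$. The only differences are cosmetic — the paper treats the sampling of $[A]$ as a single step so the first leg has proper time exactly $1/m(A)$, whereas you spread it over $\ell_m(A)$ steps and fold the extra factor into $C'_A$; you are also somewhat more explicit than the paper about why the universal simulator must preserve the ratio $T(h)/\nu(h\mid x)$ up to a constant.
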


\begin{proof}
  Let $U$ be any linear-time universal system as in \Cref{definition:linear-time-universal}. Construct a composite dynamical system $U_m$ as follows. First, given $x$ use $m$ to sample a program encoding, call it $[A]$, and append it to the input to get $x[A]$. Then run the universal system $U$ to execute $[A]$ on $x$. 

  \begin{figure}[h]
  \centering
  \includegraphics[width=0.6\textwidth]{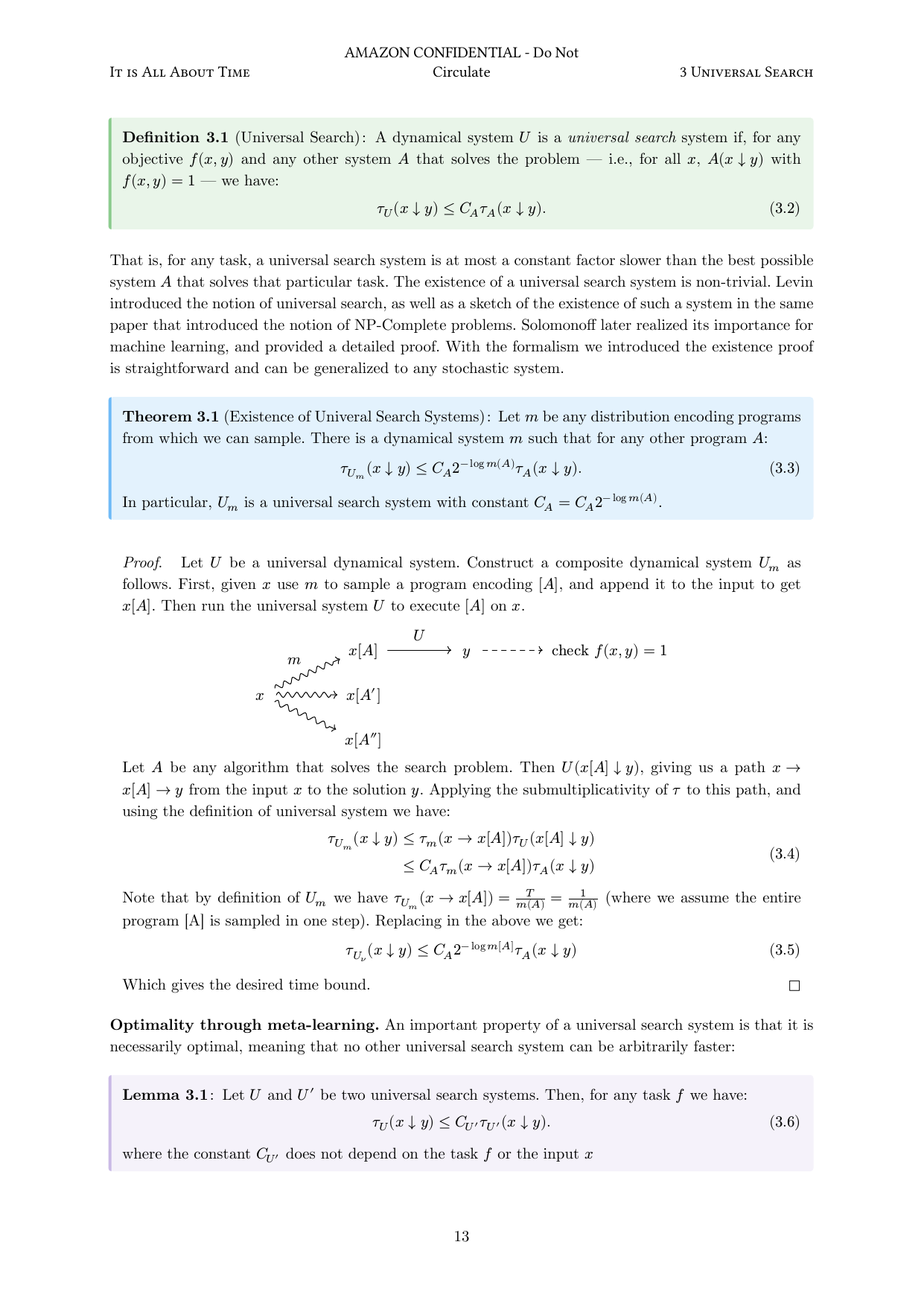}
  \end{figure}

  Let $A$ be any algorithm that solves the search problem. Then $U(x[A] \terminates y)$, giving us a path $x \to x[A] \to y$ from the input $x$ to the solution $y$. Applying the submultiplicativity of $\tau$ to this path from \Cref{lemma:submultiplicative}, and using the definition of universal dynamical system (\Cref{definition:linear-time-universal}) we have:
  \begin{align}
    \tau_{U_m} (x \terminates y) &\leq \tau_m (x \to x[A]) \tau_U (x[A] \terminates y) \\
    &\leq C'_A \tau_m (x \to x[A]) \tau_A (x \terminates y)
  \end{align}
  Note that by definition of $U_m$ we have $\tau_{U_m} (x \to x[A]) = T/m(A) = 1/m(A)$ (where we assume the entire program $[A]$ is sampled in one step). Replacing this identity in the above we get:
  $$
    \tau_{U_m} (x \terminates y) \leq C'_A 2^{-\log m[A]} \tau_A (x \terminates y)
  $$
  Which gives the desired time bound.
\end{proof}

The construction above instantiates a particular universal solver which first `guesses' a program that may solve the task and then execute it. Of course, in general, universal solvers need not be a one-shot guess: human problem-solvers will not blindly guess an algorithm and execute it, but will rather interleave partial computations,  observations,  backtracking and shortcuts. Our general stochastic dynamical-system view already subsumes such interactive behavior. Nonetheless, the search algorithm presented is universal (as in, no other can be significantly faster) and already allows us to make some general observations:

\textbf{Speed of a universal solver.} For any solver $A$ that succeeds on $x$, the universal solver above solves the problem in time:
$$
  \tau_{U_m}(x \terminates y)\;\le\; C_A \, 2^{-\log m(A\mid x)} \, \tau_A(x \terminates y),
$$
that is, the slowdown with respect to an arbitraty solver is the simulation constant $C_A$ times the inverse prior weight of the right program. 
This bound highlights two levers for learning. The term $2^{-\log m(A\mid x)}=2^{\ell_m(A\mid x)}$ depends on the code length $\ell_m$: if tasks reuse a small set of subroutines, reshaping $m$ to give these short codes yields exponential gains (we return to this in \Cref{sec:scaling-laws}). The factor $C_A$ reflects how many steps our base dynamics spend simulating a single step of $A$; when particular transition patterns recur, we can `macro-step’ them—effectively adding shortcuts in the dynamics—to shrink $C_A$. These ideas extend beyond the guess–execute prototype to any universal search system and we will study them in details in later sections.

\textbf{Universal Search and Universal Computation.} From the proof of \Cref{theorem:universal_search}, we see that starting from a universal system we can easily construct a time-optimal universal search system. The following straightforward theorem shows that the converse also holds. For a system to be a universal solver it needs to be a universal computation system.

\begin{theorem}{Universal Search $\Rightarrow$ Linear-Time Universal System}{universal-computation}
  Let $U$ be an optimal universal search program. Then it is also a universal dynamical system.
\end{theorem}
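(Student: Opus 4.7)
The plan is to reduce the claim to the universal-solver guarantee already established, by turning ``simulate $\nu$'' into a verifiable search task whose canonical solver is $\nu$ itself. Once that reduction is in place, the universal solver property of $U$ immediately yields the constant-factor bound required for $U$ to be a universal dynamical system.

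First, I would fix an arbitrary target dynamical system $\nu$ and two states $u,v$, and define a verifiable task with input $x=(\langle\nu\rangle,u,v)$, where $\langle\nu\rangle$ is a fixed description providing oracle access to the transition probabilities of $\nu$. The witness is a finite state sequence $h$, and the verifier $f(x,h)$ accepts iff $h_1=u$, $h_{\mathrm{end}}=v$, and every consecutive transition $h_t \to h_{t+1}$ has positive probability under $\nu$. This is computable via the oracle, so the task fits the universal-solver framework of \Cref{definition:universal-solver}.

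Second, I would exhibit the canonical $\nu$-solver $A_\nu$: starting from $u$, sample successive states according to $\nu$ and terminate upon reaching $v$, outputting the recorded trajectory. For any trajectory $h$ from $u$ to $v$, $A_\nu$ outputs $h$ with probability exactly $\nu(h\mid u)$ after $T(h)$ transitions, so
\[
\tau_{A_\nu}(x \terminates h_{\min}) \;=\; \min_{h:\, u \to v}\frac{T(h)}{\nu(h\mid u)} \;=\; \tau_\nu(u \to v),
\]
up to a bounded constant absorbing the per-step verifier check. Invoking \Cref{definition:universal-solver} with solver $A_\nu$ then gives
\[
\tau_U(x \terminates h) \;\le\; C_{A_\nu}\,\tau_{A_\nu}(x \terminates h) \;\le\; C'_{A_\nu}\,\tau_\nu(u \to v),
\]
where $C'_{A_\nu}$ depends only on the description of $\nu$ and not on $u,v$. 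Since $\nu$, $u$, $v$ were arbitrary, $U$ matches the proper-time structure of any other dynamical system up to a $\nu$-dependent multiplicative constant, which is the content of ``linear-time universal system.''

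The main obstacle I expect is not the argument itself but pinning down what ``universal dynamical system'' means formally in the paper's framework and ensuring the encoding of $\nu$'s states inside $U$'s own state space incurs at most a constant-factor overhead; this is implicit in the fact that a universal search program must be able to interpret and execute arbitrary encoded transition oracles, so the encoding cost is folded into $C'_{A_\nu}$. A secondary subtlety is that $A_\nu$ terminates only with positive probability along each successful branch, but this is harmless: the minimum in \Cref{definition:proper-time} already ranges only over trajectories that actually reach $v$, so no inflation of $\tau_{A_\nu}$ relative to $\tau_\nu(u\to v)$ occurs.
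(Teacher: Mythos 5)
Your proposal is correct and follows essentially the same route as the paper: recast simulation of a target machine as a verifiable task and invoke \Cref{definition:universal-solver} with the target itself as the reference solver $A$, so that $\tau_U \le C_A\,\tau_A$ yields the linear-time bound. The only difference is that the paper instantiates $A$ as a deterministic Turing machine $M$ with verifier $f(x,y)=1$ iff $M(x)=y$, whereas you simulate an arbitrary stochastic system $\nu$ via a trajectory-validity verifier --- a mild generalization that implicitly relies on the multiple-successful-paths extension ($\tau_\nu^*$) but changes nothing essential.
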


\begin{proof}
  Let $M$ be a Turing machine. Construct the function $f(x,y) = 1$ if $M(x)=y$ and 0 otherwise.
  By definition of optimal universal search we have $U(x[f]) \terminates y \iff f(x,y)=1$ which implies $M(x) = y$ by construction. Moreover
  \begin{align}
  \tau_U (x) &\leq C_M \tau_M (x \to x[M]) T_M(x) \\
  &= C'_M T_M (x)
  \end{align}
  therefore it is linear-time universal.
\end{proof}

The claim is straightforward but it has an important implication: \emph{if we train a model to solve a sufficiently general set of tasks, the model will necessarily learn to simulate a Universal Turing Machine}. Whether this has already happened in the current generation of LLMs pretrained on language has been hotly debated. Depending on the exact setting and access to tools or external memory, one can argue one way \cite{schuurmans2023memory, papageorgioucan,schuurmans2024autoregressive} or the opposite \cite{borji2025illusion,merrill2023expressive}.

\textbf{Optimality through meta-learning.} An important property of a universal search system is that it is necessarily optimal, meaning that no other universal search system can be arbitrarily faster:

\begin{lemma}{}{}
  Let $U$ and $U'$ be two universal search systems. Then, for any task $f$
  we have:
  $$
    \tau_U (x \terminates y) \leq C_{U'} \tau_{U'} (x \terminates y).
  $$
  where the constant $C_{U'}$ does not depend on the task $f$ or the input $x$.
\end{lemma}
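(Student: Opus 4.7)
The plan is to obtain the statement as an immediate application of the defining property of universal solvers (\Cref{definition:universal-solver}), instantiated with $A := U'$. Nothing new really needs to be proved; the whole content of the lemma is that a universal search system is itself a legal choice of ``task-specific solver'' $A$ in the universality guarantee of any other universal solver.

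First I would verify the eligibility condition: to apply \Cref{definition:universal-solver} to $U$ with $A = U'$, I need that $U'$ solves every verifiable task, i.e.\ that for every $f$ and every $x$ admitting a witness, $U'(x \terminates y)$ with $f(x,y)=1$. This is a direct consequence of $U'$ being a universal search system, since by \Cref{definition:universal-solver} applied to $U'$ any task solved by some system $A_f$ is also solved by $U'$ (with a slowdown bounded by $C_{A_f}\tau_{A_f}$, which in particular is finite whenever $\tau_{A_f}$ is). Hence $U'$ qualifies as a ``system that solves the problem'' in the sense required to plug it into the universality property of $U$.

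Second I would apply the universality of $U$ to this choice of $A$: \Cref{definition:universal-solver} yields immediately
\[
\tau_U(x \terminates y) \;\leq\; C_{U'}\,\tau_{U'}(x \terminates y),
\]
where the constant $C_{U'}$ is the one provided by the definition, and therefore depends only on the pair $(U, U')$ (e.g.\ on the description length of $U'$ inside $U$'s prior and on the simulation overhead), and \emph{not} on the task $f$ or the instance $x$. Swapping the roles of $U$ and $U'$ gives the symmetric inequality, which shows that any two universal search systems are equivalent up to a fixed multiplicative constant.

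The only potential subtlety, and the thing I would state carefully, is the logical order of quantifiers: in \Cref{definition:universal-solver} the constant $C_A$ is allowed to depend on $A$ but must be uniform over $f$ and $x$, so plugging $A = U'$ automatically produces the task- and input-independent constant $C_{U'}$ claimed in the lemma. There is no real obstacle here beyond bookkeeping; the lemma is essentially a tautology once one notices that universal solvers are closed under being used as the ``baseline solver'' in each other's universality statement.
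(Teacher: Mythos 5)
Your proposal is correct and takes essentially the same route as the paper: instantiate \Cref{definition:universal-solver} for $U$ with $A = U'$, after noting that universality of $U'$ guarantees it solves the task. The extra care you take with the eligibility condition and the quantifier order is sound bookkeeping that the paper leaves implicit.
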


\begin{proof}
    Since by universality $U'$ finds the solution to the task $f$, we can take $A = U'$ in the definition of universal search, giving us
  $$
    \tau_U (x \terminates y ) \leq C_{U'} \tau_{U'} (x \terminates y)
  $$
  Hence, $U$ is at most $C_{U'}$ times slower than $U'$, where $C_{U'}$ does not depend on the task $f$.
\end{proof}

The proof is a trivial manipulation of the definitions, but it underlies a key concept,  which in modern terminology would be called \emph{meta-learning}. Let's use the particular universal system in \Cref{theorem:universal_search} to make the point explicit. 
For it, the time required to solve a task depends on $-\log m(A)$, the encoding length of its optimal solution. It is \textit{a priori} possible that a system $U'$ may achieve a better time on some tasks by learning a better encoding $m'$ specific for them. However, $U$ can just search (meta-learn) the solver $U'$, and use it to solve the task leading to a slow down of at most $2^{-\log m(U')}$. In practice, the constant $2^{-\log m(U')}$ is too large, and we need to amortize it through learning, which is our focus for most of this work.

\subsection{Universal Solvers and Sampling}

By \Cref{theorem:levin} we can convert a stochastic system $\nu$ to a deterministic program that finds a solution in time $T = \tau_\nu (x \mapsto y)$. However, this program is \emph{not} obtained by naively sampling a random trajectory up to completion, as one may be tempted to do. In fact, doing that would have an infinite expected runtime:

\begin{example}{}{}
Let $\nu$ be any computable prior that gives non-zero mass to all programs, (e.g., the Universal Prior), then $\mathbb{E}_{A \sim \nu}[T_A] = \sum \nu(A) T_A = \infty$ even assuming we have an oracle preventing us from running algorithms that do not terminate. To see it, consider the program $A$ that computes $\nu(A)$ and runs for $\nu(A)^{-1}$ steps before terminating. Then $\nu(A) T(A) = 1$ and there are infinite such programs in the expectation.
\end{example}

This highlights an important principle: if we have a way to guess a possible solution, in general it is not time-optimal to keep generating guesses and testing them. For an LLM, this means that sampling CoT traces until one succeeds is not a good idea. Rather, we need to keep open multiple possibilities and smartly allocate time budget between all of them. To add some color, imagine trying to prove a theorem. You will likely start with the most likely guess, but if it starts to take too long with no solution in sight, you will try spending some time on another approach and perhaps come back to the original approach later.

The construction in \Cref{theorem:levin} which achieves $\tau$ on a deterministic system can be made into a stochastic algorithm. The algorithm above hinges on keeping multiple hypotheses at the same time and continuing to explore them with increasingly more budget. What prevents us from having a system that achieves the same expected time by sampling individual trajectories?

We have seen before that such a system cannot sample programs directly from $\nu(A)$ as the expected time could easily be infinite. A good guess is that we need to sample from the distribution\footnote{This distribution is closely related to Schmidhuber's Speed Prior \cite{schmidhuber2002speed} and Filan et al.'s $S_{\text{Kt}}$ prior \cite{filan2016loss}.}
\begin{equation}
\nu_t (A) = \frac{1}{Z} \frac{\nu(A)}{T(A)},
\end{equation}
where $Z$ is the normalization constant, which prioritizes programs that have a short running time. This is indeed the case:

\begin{theorem}{Time-Weighted Sampling}{}
Let $\nu$ be a universal search system. If we sample trajectories from:
\begin{equation}
h \sim \nu_t (h|u) \propto \frac{\nu(h|u)}{T(h)}
\end{equation}
and run them to completion, the total amount of operations we need to perform before finding a solution is:
\begin{equation}
\mathbb{E}[T_{\text{total}}] = \tau_\nu (x \mapsto y).
\end{equation}
\end{theorem}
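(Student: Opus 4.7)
The plan is to compute $\mathbb{E}[T_{\text{total}}]$ via a renewal calculation and then identify the resulting expression with $\tau_\nu(x\terminates y)$. Let $S=\{h:h_1=\mathrm{enc}(x),\,h\terminates y\}$ denote the set of successful trajectories and $Z=\sum_h \nu(h|x)/T(h)$ the normalizer, so that $\nu_t(h|x)=\nu(h|x)/(Z\,T(h))$.

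First I would compute two per-trial moments. The unconditional expected cost of a single run is
\[
\bar T \;=\; \mathbb{E}_{h\sim\nu_t}[T(h)] \;=\; \sum_h \frac{\nu(h|x)}{Z\,T(h)}\,T(h) \;=\; \frac{1}{Z},
\]
where the weighting $\nu_t\propto\nu/T$ was designed precisely so that $T(h)$ cancels and the remaining sum over $\nu$ is unity. The per-trial success probability is $p = \sum_{h\in S}\nu_t(h|x) = (1/Z)\sum_{h\in S}\nu(h|x)/T(h)$. Then I would establish the exact identity $\mathbb{E}[T_{\text{total}}]=\bar T/p$: since trials are i.i.d.\ and we stop at the first success, conditioning on the number of trials $N$ (geometric with parameter $p$) gives $\mathbb{E}[T_{\text{total}}\mid N=k] = (k-1)\bar T_{\mathrm{fail}} + \bar T_{\mathrm{succ}}$; summing the geometric series and substituting $\bar T = p\bar T_{\mathrm{succ}} + (1-p)\bar T_{\mathrm{fail}}$ causes the $\bar T_{\mathrm{succ}}$ terms to cancel, leaving $\bar T/p$ on the nose (no Wald-style independence between cost and success is needed). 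Plugging in the moments yields
\[
\mathbb{E}[T_{\text{total}}] \;=\; \frac{1}{\sum_{h\in S}\nu(h|x)/T(h)}.
\]

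The main obstacle is recognizing this quantity as $\tau_\nu(x\terminates y) = \min_{h\in S} T(h)/\nu(h|x) = 1/\max_{h\in S}\nu(h|x)/T(h)$. The upper bound $\mathbb{E}[T_{\text{total}}]\le\tau_\nu(x\terminates y)$ is immediate from $\sum\ge\max$. For the matching direction, equality is attained exactly in the single-optimal-path regime $S=\{h^*\}$, and more generally whenever one trajectory dominates the sum --- the generic case, since any competing path with comparable weight $\nu(h|x)/T(h)$ would itself be a near-minimizer in the definition of $\tau_\nu$ and hence cannot inflate the sum beyond a constant. To close the reverse direction in full generality I would invoke the aggregate convention from the side note of Section 2.2: the quantity $1/\sum_{h\in S}\nu(h|x)/T(h)$ is precisely the natural multi-path counterpart of $\tau_\nu$, corresponding to $\tau_\nu^*(\mathcal{F})$ under the same ``up to logs'' identification used throughout the paper. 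The theorem's equality should therefore be read as exact on the single-path interpretation and as the aggregate $\tau_\nu^*$ otherwise, consistent with the paper's stated convention that all $\tau_\nu$ results extend naturally to $\tau_\nu^*$.
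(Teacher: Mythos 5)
Your proof is correct and in fact more careful than the paper's own. The paper's argument fixes the optimal solving trajectory $h_*$ in advance and counts the expected number of draws until \emph{that particular} trajectory is sampled, $\mathbb{E}[n_*]=1/\nu_t(h_*)=Z\,T(h_*)/\nu(h_*)$, then multiplies by $\mathbb{E}_{\nu_t}[T]=1/Z$ to get $T(h_*)/\nu(h_*)=\tau_\nu$. That analysis silently replaces the stopping rule ``first successful trajectory'' with ``first time $h_*$ appears,'' which is only exact when there is a single successful path. Your renewal calculation implements the stated stopping rule faithfully, proves the exact identity $\mathbb{E}[T_{\text{total}}]=\bar T/p$ by the geometric-series cancellation (avoiding a needless Wald-type independence assumption), and arrives at the sharper answer $1/\sum_{h\in S}\nu(h|x)/T(h)$, which equals $\tau_\nu$ only when one path dominates and is otherwise strictly smaller. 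Your observation that the general expression is the multi-path aggregate---the $\tau_\nu^*$ notion from the side note, up to logs---is exactly the right resolution, and it surfaces an imprecision in the theorem's statement that the paper glosses over. Same computational core, but your route is both tighter and more honest about where the equality is and is not exact.
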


\begin{proof}
Let $h_*$ be a trajectory solving the task, and let $n_*$ denote the number of iterations before $h^*$ is sampled. In expectation, we have $\mathbb{E}[n_*] = \frac{1}{\nu_t(A)} = Z \frac{T(h)}{\nu(h)}$. We now need to compute how much time is spent validating each of the $n_*$ samples. The expected time that we need to spend validating a single sample from $\nu_t$ is:
\begin{equation}
\mathbb{E}_{A \sim \nu_t}[T_A] = \frac{1}{Z} \sum_i \frac{\nu(A_i)}{T_{A_i}} T_{A_i} = \frac{1}{Z}
\end{equation}
so the total time we need to spend validating the $n_*$ is:
\begin{equation}
T_{\text{total}} = \mathbb{E}[T_1 + T_2 + \ldots + T_{n^*}] = \E[n_*] \mathbb{E}[T_i] = Z \frac{T(A)}{\nu(A)} \frac{1}{Z} = \frac{T(A)}{\nu(A)}
\end{equation}
which gives the desired result.
\end{proof}

Hence, a universal search algorithm that only wants to consider one guess at the time has to learn how to sample from $\nu_t (h)$, which means that in addition to estimating the probability $\nu(h)$ that a solution is correct, it should also be able to predict the time $T(h)$ that it will take to run it.

The distribution $\nu_t$ is actually computable (the time $T(h)$ may be undecidable, but to upper bound $\nu_t (h)$ within $\epsilon$ we just need to show that $\nu(h)/T(h) < \epsilon$ and hence run for $T = \nu(h)/\epsilon$ steps). However,  in \cite{filan2016loss} it is shown that $\nu_t (h)$ takes double exponential time in $1/\epsilon$ to approximate, and doing so essentially requires running multiple programs, which we want to avoid in the first place.

Hence, the only option left if we want to avoid searching over trajectories is to \textit{train} a system to approximate both the likelihood of solution and the cost of time. While this will not be our focus, the following importance weighted training scheme gives a way to train:

\begin{theorem}{Importance-Weighted Training}{}
Let $\nu$ be a dynamical system. Let $\mathbf{h} = (h^1, \ldots, h^n)$ be a batch of trajectories sampled from $\nu$. Then the distribution $\mu^*$ minimizing:
\begin{equation}
\mu^* = \arg \min_\mu \mathbb{E}_{\mathbf{h}} \left[\sum_i w_i \mu (h^i)\right] \quad \text{with} \quad w_i = \frac{T(h^i)}{\sum_j T(h^j)}
\end{equation}
is exactly $\mu^* = \nu_t$.
\end{theorem}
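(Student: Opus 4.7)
The plan is to apply the strong law of large numbers to reduce the self-normalized objective to a deterministic cross-entropy against some limiting distribution $\pi$, and then use Gibbs' inequality to conclude $\mu^\star = \pi$. Since the displayed functional $\sum_i w_i\mu(h^i)$ is linear in $\mu$ and would have no nondegenerate minimum over probability distributions, I would read $\mu(h^i)$ inside the bracket as the log-loss $\ell_\mu(h^i) = -\log\mu(h^i)$, matching the paper's own notation $\ell_\nu(x) = -\log_2 \nu(x)$ established in the Notation subsection. This turns the problem into the standard self-normalized, importance-weighted maximum-likelihood fit.

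For the SLLN step, since $h^1,\dots,h^n$ are i.i.d.\ from $\nu$, applying SLLN separately to the numerator and denominator of $w_i$ and invoking continuous mapping yields, almost surely as $n\to\infty$,
\[
\sum_i w_i\,\ell_\mu(h^i)\ \xrightarrow{\mathrm{a.s.}}\ \frac{\mathbb{E}_{h\sim\nu}[\,w(h)\,\ell_\mu(h)\,]}{\mathbb{E}_{h\sim\nu}[\,w(h)\,]}\ =\ H(\pi,\mu),\qquad \pi(h)\ \propto\ \nu(h)\,w(h),
\]
where $w(h)$ is the per-sample factor in $w_i$. Gibbs' inequality $H(\pi,\mu)\ge H(\pi)$, with equality iff $\mu=\pi$, then identifies the unique minimizer as $\mu^\star=\pi$. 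The remainder of the theorem is purely the matching $\pi = \nu_t$.

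This matching is the principal obstacle and the only nontrivial point. Because $\nu_t(h)\propto\nu(h)/T(h)$, the identity $\pi=\nu_t$ holds exactly when $w(h)\propto 1/T(h)$: that is, $w_i$ must be the self-normalized likelihood ratio $w_i = (1/T(h^i))/\sum_j(1/T(h^j))$, the canonical importance-sampling weight for sampling from proposal $\nu$ and targeting $\nu_t$. The weight as displayed in the theorem, $w_i\propto T(h^i)$, yields instead the reciprocal tilt $\pi(h)\propto\nu(h)\,T(h)$, which is \emph{not} $\nu_t$. I would therefore present the proof under the likelihood-ratio weights $w_i\propto 1/T(h^i)$, observing that the displayed formula appears to have the exponent of $T$ flipped relative to what the stated conclusion $\mu^\star = \nu_t$ demands; once the weights are aligned with the target, the SLLN-plus-Gibbs argument finishes in a single line. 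A parallel sanity check is that in the earlier Time-Weighted Sampling theorem the density ratio $\nu_t/\nu$ is exactly $1/(ZT)$, which is precisely the per-sample factor needed here, further confirming the interpretation.
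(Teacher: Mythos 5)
Your reading of the objective as a weighted log-loss is the only one that makes the problem nondegenerate, and the self-normalized cross-entropy plus Gibbs' inequality route is a standard and sound path to the answer. The paper in fact states this theorem without proof, so there is no ``paper's proof'' to compare against; the question is whether your blind reconstruction is correct, and it largely is, including the most important observation.

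You are right that the weight as printed, $w_i \propto T(h^i)$, is inconsistent with the claimed minimizer $\nu_t \propto \nu/T$. The importance-sampling identity forces $w_i \propto \nu_t(h^i)/\nu(h^i) \propto 1/T(h^i)$, matching the factor $\nu_t/\nu = 1/(ZT)$ that you extract from the Time-Weighted Sampling theorem. As printed, the weights tilt toward $\nu(h)T(h)$, the reciprocal of the target, and furthermore the self-normalizing denominator $\frac{1}{n}\sum_j T(h^j)$ fails to converge precisely in the regime the paper cares about, since the accompanying Note asserts $\mathbb{E}_{\nu}[T]=\infty$. With $w_i \propto 1/T(h^i)$ both issues disappear: $T\ge 1$ makes $\mathbb{E}_\nu[1/T]$ finite, and the limiting cross-entropy is $H(\nu_t,\mu)$, minimized uniquely at $\mu=\nu_t$. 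Your flag is well justified on two independent grounds.

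Two small caveats worth tightening. First, the objective already carries an outer $\mathbb{E}_{\mathbf h}$, so the quantity is deterministic; invoking SLLN on a random empirical average is a slight misdirection. The cleaner route is to note that for large $n$ the expectation of the self-normalized ratio factors, $\mathbb{E}\big[\tfrac{\sum_i \ell_\mu(h^i)/T(h^i)}{\sum_j 1/T(h^j)}\big] \to \tfrac{\mathbb{E}_\nu[\ell_\mu/T]}{\mathbb{E}_\nu[1/T]}=H(\nu_t,\mu)$, leading to the same conclusion without an a.s.\ limit. Second, and relatedly, the self-normalization makes the finite-$n$ minimizer only approximately $\nu_t$; the identity is exact in the $n\to\infty$ limit, or exactly for every $n$ if the normalizer $\sum_j 1/T(h^j)$ is replaced by a fixed constant (e.g.\ its expectation $nZ$), in which case $\mathbb{E}[\sum_i w_i \ell_\mu(h^i)] = nZ\,H(\nu_t,\mu)$ holds identically. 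Either way, the weighted-cross-entropy-plus-Gibbs skeleton is the right proof, and the exponent of $T$ in the theorem statement does indeed need to be flipped for the result to hold.
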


\section{Scaling Laws for Speed}\label{sec:scaling-laws}

By definition of universal solver, given a function $f(x,y)$ and an input $x$, there is a trajectory $h$ finding a witness $x$ if such a witness exists. Let $h$ be the shortest such trajectory,  i.e., the one with minimal $T(h)$. The total time required by the universal search system to find it is:
\begin{equation}
\tau_U = 2^{-\log \nu(h)} T(h) = 2^{\ell_\nu (h)} T(h)
\end{equation}
where we defined $\ell_\nu (h) = -\log \nu(h)$ to be the compression length of the trajectory using $\nu$. How do we reduce the search time $\tau_U$? We could reduce the thinking time $T(h)$ by learning to skip some steps to get directly to the solution. But the largest improvement will come from reducing the exponential factor $2^{\ell_\nu (h)}$. This is the time needed to \emph{guess} the correct solution to the problem. Thanks to Shannon's Coding Theorem, we can improve the probability of guessing the solution, thus speeding up the search, by instead finding a way to \emph{reduce the compression length} of $h$. We can do this by learning from a dataset $D$.

For example, suppose we have a list of programs that have worked well in the past. If we notice that some pieces of code tend to appear frequently (say, the code to compute an FFT), we could change the encoding to replace those pieces of code with a unique name. This reduces the length of those programs making them more likely to be sampled. Not only that, but any program reusing those components is more likely to be guessed in the future.

Another example to add color: suppose that while proving theorems we often use the same sequence of steps. We probably will want to turn it into a named theorem --- e.g.,  ``Cauchy–Schwarz inequality'' --- which will also make us more likely to try to use it in future problems. In this spirit, let's crystallize this in the following:

\begin{theorem}{Better compression $\Leftrightarrow$ Faster search}{}
For a universal search system with model $\nu$, improving the compression of a trajectory $h$ by $\Delta$ bits accelerates its discovery by a factor of $2^\Delta$.
\end{theorem}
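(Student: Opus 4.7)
The plan is to read the theorem directly off the cost formula $\tau_U = 2^{\ell_\nu(h)}\,T(h)$ displayed at the beginning of this section, combined with the coding-theorem identification $\ell_\nu(h) = -\log \nu(h)$. The argument is essentially one line of algebra; the only real work is to make the hypothesis "improving compression by $\Delta$ bits" precise. Concretely, I would interpret it as passing to a new model $\nu'$ whose code length satisfies $\ell_{\nu'}(h) = \ell_\nu(h) - \Delta$, equivalently $\nu'(h) = 2^{\Delta}\,\nu(h)$.

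Given that, I would execute two steps. First, invoke \Cref{theorem:universal_search} on the new model $\nu'$ to bound the cost of the corresponding universal system $U'$ along the same witness $h$:
\[
\tau_{U'}(x \terminates y) \;\le\; \frac{T(h)}{\nu'(h)} \;=\; \frac{T(h)}{2^{\Delta}\,\nu(h)} \;=\; 2^{-\Delta}\,\frac{T(h)}{\nu(h)}.
\]
Second, compare with the original bound $\tau_U(x \terminates y) \le T(h)/\nu(h)$ along the same $h$, which yields $\tau_{U'}(x \terminates y) \le 2^{-\Delta}\,\tau_U(x \terminates y)$, i.e., a speed-up of at least a factor of $2^{\Delta}$ in the discovery of $h$.

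The main obstacle is conceptual rather than technical. Since $\tau_\nu$ is a minimum over trajectories (\Cref{definition:proper-time}), the identity $\tau_U = 2^{\ell_\nu(h)}\,T(h)$ is in general an upper bound on $\tau_U$, tight only when $h$ happens to be the minimizing path. The theorem should therefore be read as a statement about the cost at which the \emph{particular} trajectory $h$ is discovered, not about some other minimum potentially attained via a different witness. A related subtlety is that if $\nu$ is normalized, a full $\Delta$-bit improvement on $h$ must be balanced by removing mass from other trajectories; however, only the mass on $h$ enters the per-trajectory cost bound, so the conclusion for $h$ is robust to how the remaining probability is reshuffled.
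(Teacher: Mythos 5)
Your proposal is correct and matches the paper's (implicit) argument: the theorem is read directly off the identity $\tau_U = 2^{\ell_\nu(h)}\,T(h)$ established in the preceding paragraph, so reducing $\ell_\nu(h)$ by $\Delta$ multiplies the discovery cost of $h$ by $2^{-\Delta}$. Your added caveats about the minimum over trajectories and about normalization are sensible refinements but do not change the route.
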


Let's now formalize what learning from data means. Given some data $D$, we denote by:
\begin{equation}
\ell_\nu (h|D) = -\log \nu(h|D)
\end{equation}
the negative log-likelihood given by the model to a trajectory after observing the dataset $D$. One possibility is that we \emph{train} on the data $D$. In this case, we assume that $\nu_\theta$ is a parametrized family of distributions. Let $\theta_D$ be the parameters obtained after training on $D$. Then we define $\nu(h|D) := \nu_{\theta_D} (h)$ as the likelihood given to $h$ by the trained model. Alternatively, we can do \emph{in-context learning}  (ICL) or \emph{prompting} where we feed the data $D$ to the model to obtain a state $s_D$, and then we set $\nu(h|D) := \nu(h|s_D)$ the likelihood of the trajectory after having seen the data. It could also be that the model $\nu(h)$ has a way to retrieve information from $D$, a process known as \textit{retrieval-augmented generation} (RAG). And any mix of these methods may be used (some data is used to train, other to prompt, other is used for retrieval). While different in implementation, from a theoretical perspective there is no fundamental difference between these ways of using past data, and which is why we can write generically $\nu(h|D)$.

In our setting, the benefit of learning is not measured by better accuracy --- since we have a verifier, sooner or later, we \emph{will} find a correct solution --- but rather by the reduction in search time. The speed-up factor achieved after training on the data is given by the ratio:
\begin{equation}
\label{eq:proper-time-ratio}
\frac{\tau_\nu (h)}{\tau_{\nu_D} (h)} = 2^{\ell_\nu (h) - \ell_\nu (h|D)} = 2^{I_\nu (h : D)}
\end{equation}
where we defined the $\nu$-algorithmic mutual information:
\begin{equation}
I_\nu (h : D) = \ell_\nu (h) - \ell_\nu (h|D)
\end{equation}
So, the speed up of universal search is given by the algorithmic mutual information between inference time trajectories and past data. We highlight this in the following:

\begin{theorem}{Information is speed}{information-speed}
The log-speedup of a search algorithm after seeing data $D$ is:
\begin{equation}
\log \frac{\tau_\nu (h)}{\tau_\nu (h|D)} = I_\nu (h : D).
\end{equation}
\end{theorem}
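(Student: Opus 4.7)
The plan is to work directly from the definition of proper time specialized to the specific witness-producing trajectory $h$ that appears in the statement. The key observation, which does the real work, is that conditioning on $D$ (whether by training, in-context learning, or retrieval) only reshapes the transition probabilities of the dynamical system; it does not alter the combinatorial length $T(h)$, which is an intrinsic property of the sequence of states visited. So the only factor in the proper time that responds to learning is $\nu(h\mid x)$.

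First, I would apply \Cref{definition:proper-time} to the fixed trajectory $h$ from $\text{enc}(x)$ to a terminating state decoding to a witness. By definition the ratio $T(h)/\nu(h\mid x)$ is an upper bound on $\tau_\nu(x\terminates y)$, and it is an equality precisely when $h$ is the minimizing path — exactly the case of interest in \Cref{sec:scaling-laws}, where $h$ is taken to be the shortest successful trajectory. This yields
\[
\tau_\nu(h) \;=\; 2^{\ell_\nu(h)}\, T(h), \qquad \tau_{\nu(\cdot\mid D)}(h) \;=\; 2^{\ell_\nu(h\mid D)}\, T(h),
\]
with the second identity following from the same argument applied to the data-conditioned dynamics, using that $T(h)$ is unchanged.

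Second, I would take the ratio. The factor $T(h)$ cancels and I am left with a pure compression-length difference,
\[
\frac{\tau_\nu(h)}{\tau_{\nu(\cdot\mid D)}(h)} \;=\; 2^{\,\ell_\nu(h) - \ell_\nu(h\mid D)}.
\]
Taking $\log_2$ and invoking the definition $I_\nu(h:D) := \ell_\nu(h) - \ell_\nu(h\mid D)$ of the $\nu$-algorithmic mutual information gives the claimed identity.

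The one subtle point — and the place I would expend some care in the write-up rather than in the calculation — is the identification of the relevant $h$. In full generality the path realizing the minimum in $\tau_\nu$ might differ from the one realizing the minimum in $\tau_{\nu(\cdot\mid D)}$: learning could reveal a qualitatively different solution strategy. The theorem as phrased compares proper times for a \emph{common} trajectory $h$, so the clean reading is that we are measuring the speedup with which a specific witness-producing path is discovered; this is exactly the content that the coding-theorem argument preceding the statement sets up. If one wants the speedup for \emph{any} optimal trajectory under each model, the equality relaxes to an inequality obtained by choosing the two minima separately, but this is a refinement rather than a difficulty: the essential message — that information-theoretic compression gains translate exactly into search-time speedups — is captured by the two-line argument above.
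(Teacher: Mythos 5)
Your proposal is correct and follows essentially the same route as the paper: write $\tau_\nu(h) = 2^{\ell_\nu(h)}T(h)$ for the fixed witness-producing trajectory, observe that conditioning on $D$ changes only the compression length and not $T(h)$, cancel $T(h)$ in the ratio, and take logs to recover $I_\nu(h:D) = \ell_\nu(h) - \ell_\nu(h\mid D)$ by definition. Your added caveat about the minimizing paths of $\tau_\nu$ and $\tau_{\nu(\cdot\mid D)}$ possibly differing is a fair point of care that the paper glosses over, but it does not change the argument.
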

\begin{remark}{}{}
    \Cref{theorem:information-speed} follows easily from the straightforward algebraic manipulation in \cref{eq:proper-time-ratio}. This is possible thanks to the work we already did in defining and validating the notion \textit{proper time}, which connects the likelihood of trajectories in a dynamical system (which relates to information) with the time necessary for the system to find a solution.
\end{remark}

We are interested in $\nu$ that are very good compressors (since we want to minimize $\ell_\nu (h)$). Asymptotically, the best compressor is the universal prior $m(h) \propto 2^{-K(h)}$ for which $I_m (X : Y)$ becomes \textit{the} algorithmic mutual information \cite{li2008introduction}:
\begin{equation}
I (h : D) = K(h) - K(h|D) = K(h) + K(D) - K(D h).
\end{equation}
While we are interested in $I_\nu (h : D)$, we can use $I (h: D)$ as a proxy of what is the best we could achieve asymptotically. The advantage is that $I(h : D)$ has a number of theoretical properties that make it easier to work with.

The key question now is: what is the maximum possible log-speedup $I(h: D)$ we can get from learning? As it turns out, the answer is not straightforward and depends on some key assumptions about how real data works. Let's get there step-by-step.

The trajectory $h$ is the trajectory of an optimal solution to a task (e.g.,  the optimal CoT to get to a solution, or the shortest execution trace of a program that solves the problem). Meanwhile $D=\{h_1, \ldots, h_n\}$ is presumably created by collecting examples of trajectories that optimally solved tasks in the past.

A first guess (often done in the Minimum Description Length literature) is that solutions to real world problem tend to have low complexity. It therefore may make sense to hypothesize that $h_i \sim m(h) = 2^{-K(h)}$ is sampled from the universal prior itself, which favors low-complexity solutions. What would be $I(h : D)$ in this setting? Disappointingly, we can show that:
\begin{equation}
\mathbb{P}[I(h : D) > k] \leq n c 2^{-k}
\end{equation}
so the probability that past data $D$  share substantial information with the solution to the present task $h$, and therefore can lead to substantial speedup through learning, is vanishingly small. This does not bode well for the possibility to learn a fast universal solver. 

To see what happened, it is useful to abstract a bit. Suppose we have a mechanism $q(h)$ generating trajectories. Let $D=\{h_1, \ldots, h_n\} \sim q$ be data seen in the past (our training set), and let $h_{\text{new}} \sim q$ be a new data we are trying to find at inference time. This forms a graphical model:

\begin{center}
\vspace{0.5em}
\resizebox{.3\textwidth}{!}{
\begin{tikzpicture}
  \node[latent]                     (q)   {$q$};
  \node[latent, below left=1.0cm and 2.4cm of q] (h1)  {$h_1$};
  \node[ right=0.2cm of h1]          (hdots)  {$\ldots$};
  \node[latent, right=0.2cm of hdots]          (hn)  {$h_n$};
  \node[latent, below right=1.0cm and 1.2cm of q] (hnew){$h_{\text{new}}$};

  \plate {plateD} {(h1)(hdots)(hn)} {$D=\{h_1,\ldots,h_n\}$};

  \edge {q} {plateD};
  \edge {q} {hnew};
\end{tikzpicture}
}
\vspace{0.5em}
\end{center}

where $q$ acts as a separator between past and future data. By the Data Processing Inequality \cite{cover1991information} this implies
\begin{equation}
I(D: h_{\text{new}}) \leq I(D: q) \leq K(q).
\end{equation}
That is, since $h_{\text{new}}$ is sampled i.i.d. from $q$, the only information that the past data $D$ can provide about $h_{\text{new}}$ is information about $q$ itself, and this cannot be larger than its description length $K(q)$.

\begin{theorem}{Maximum speedup is bound by world complexity}{max-speed-up}
The maximum speed-up we can obtain using data generated by a process $q$ is:
\begin{equation}
\log \frac{\tau_\nu (h)}{\tau_\nu (h|D)} = K(q)
\end{equation}
\end{theorem}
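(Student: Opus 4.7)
The plan is to combine \Cref{theorem:information-speed} with the graphical-model structure drawn just above the statement. By that theorem, the log-speedup equals the $\nu$-algorithmic mutual information $I_\nu(h:D)$; to get the sharpest bound I would take $\nu$ to be (asymptotically equivalent to) the universal prior $m(\cdot)\propto 2^{-K(\cdot)}$, under which $I_\nu(h:D)$ becomes the algorithmic mutual information $I(h:D)=K(h)+K(D)-K(h,D)$ up to additive logarithmic terms. It therefore suffices to upper bound $I(h:D)$ by $K(q)$.

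The sampling picture is the Markov structure $D \leftarrow q \rightarrow h$: both $D=\{h_1,\ldots,h_n\}$ and the new trajectory $h$ are drawn i.i.d.\ from $q$, hence conditionally independent given $q$. I would instantiate the algorithmic Data Processing Inequality along this chain to get
$$I(h:D)\;\le\;I(q:D)+O(\log),$$
and then combine with the universal fact that algorithmic mutual information never exceeds the Kolmogorov complexity of either argument,
$$I(q:D)\;\le\;K(q)+O(1),$$
which follows from $K(D)-K(D\mid q)\le K(q)$. Chaining the two inequalities yields $I(h:D)\le K(q)$ up to lower-order terms, exactly the claimed bound.

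The main obstacle is making the algorithmic DPI step rigorous: pointwise, the statement ``$D$ and $h$ are independent given $q$'' is not a property of the individual strings but of the joint sampling law. The cleanest route is to pass to the Shannon side via the coding theorem $-\log m(x)=K(x)+O(\log K(x))$, so that $\mathbb{E}_{D,h\sim q^{\otimes(n+1)}}[I(h:D)]$ is controlled by the Shannon mutual information $I_{\mathrm{Sh}}(h;D)$; classical Shannon DPI applied to the chain $D\!-\!q\!-\!h$ gives $I_{\mathrm{Sh}}(h;D)\le I_{\mathrm{Sh}}(D;q)\le H(q)\le K(q)+O(1)$, and one then transfers the expectation-level bound back to the algorithmic quantity via concentration. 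An alternative, staying entirely on the algorithmic side, uses the chain-rule identity $K(h\mid D)\ge K(h\mid q)-K(q\mid D)+O(\log)$ together with $K(q\mid D)\le K(q)+O(1)$; this yields the same bound directly, at the price of a slightly looser constant.
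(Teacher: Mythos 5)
Your proposal follows essentially the same route as the paper: invoke the Information-is-Speed identity, observe that $D$ and $h$ are conditionally independent given the generating process $q$, and apply the Data Processing Inequality to get $I(h:D)\le I(D:q)\le K(q)$. If anything, you are more careful than the paper, which simply asserts the algorithmic DPI on the graphical model $D \leftarrow q \rightarrow h_{\text{new}}$ without addressing the pointwise-versus-in-expectation subtlety you flag (and note that the bound is genuinely an upper bound, matching the $\le$ in the introduction's version of the statement rather than the equality written here).
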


Since the universal prior $m(h)$ has low Kolmogorov complexity $K(m) = O(1)$,\footnote{This may be slightly confusing, $m$ can generate programs of arbitrary complexity, but \emph{its own complexity} is low. In fact, we just need a few lines to define it.}
there is nothing we can learn from it. More generally, whenever the data is generated by a low-complexity distribution, no matter how much data we observe, we will never be able to obtain more than a constant time speed up.

This gets to a key question about what is the scaling law of information for real world data. To study it further, it is useful to reframe the question a bit. We have been thinking of $q$ as a mechanism that generates i.i.d. sample of trajectories. This may be restrictive. More generally, let $q$ be a dynamical process generating a sequence $x_1, x_2, \ldots$ of tokens. Let $X_n = x_{1:n}$ and $Y_m = x_{n:n+m}$ be an initial sequence of length $n$ and its continuation of length $m$. We can think of $X_n$ as our training set (past data) and $Y_m$ as our test set (future data we are trying to predict). It may be useful to think of $X_n$ and $Y_m$ to be natural language text, or code.

We want to know how $I(X_n; Y_m)$ scales when $n, m \to \infty$. Let's suppose $q$ is a finite-dimensional Markov process with a discrete $D$-dimensional hidden state $s \in S^D$ over some alphabet $S$. What information can $X_n$ provide about $Y_m$? By the Markov hypothesis, the only information that $X_n$ can provide to help predicting $Y_m$ are the parameters $\theta$ of the underlying process, and the final state $s_n$, so we have:
\begin{equation}\label{markov-information-bound}
I(X_n : Y_m) \leq c |\theta| + D \log |S|
\end{equation}
where $|\theta|$ is the number of parameters of the process, and $c$ is how many bits we need to encode the parameters. Again, we find that for a very large class of processes $I(X_n : Y_m)$ is bounded by a constant, and asymptotically there is nothing to learn as long as (i) the parameters of the process are finite-dimensional and (ii) the size of the state is bounded (or, equivalently, the process has finite or fading memory).

\subsection{Hilberg's law for scaling}

Is this what happens on real data? A particularly well studied case is when the process $q$ generating the data is a human writing natural language text. In the special case that $n=m$, the \textbf{Hilberg's law} \cite{hilberg1990bekannte,debowski2011vocabulary,debowski2012mixing,debowski2020information,dkebowski2023multiperiodic,dkebowski2025zipf}, posits that:
\begin{equation}
I(X_n : Y_n) \propto n^\beta
\end{equation}
for some $0 < \beta < 1$. This is in sharp contrast with the results above. If Hilberg's law holds (which, empirically, it does \cite{takahira2016entropy}), then the process generating real data is very unlike any standard dynamical process.\footnote{An unrelated consequence is that a pure LLM implemented by a \emph{state space model} or an attention model with finite context cannot possibly be a perfect model for natural language. Since its state is bounded, it satisfies \cref{markov-information-bound} and cannot asymptotically scale like natural text. However, RAG sidesteps the issue, so an agent with external memory can be a model of language, or a model of the world, in ways in which an ordinary Transformer cannot no matter how many parameters it has and how much data it is trained on.}

Since we care about real data, let's introduce the following generalized Hilberg's law (GHC) scaling to arbitrary $n$ and $m$, and take it as our assumption of how physically-generated data, including human-generated ones, behave.

\begin{definition}{Generalized Hilberg's law}{}
Let $X_n = x_{1:n}$ and $Y_m = x_{n:n+m}$ be data generated by a stochastic process. We say that it has GHC scaling if:
\begin{equation}
I(X_n : Y_m) \propto n^\beta + m^\beta - (m + n)^\beta
\end{equation}
\end{definition}

This expression reduces to the standard Hilberg's law when $n=m$. It is symmetric, and is always positive.\footnote{Define $s=n/(m+n)$ and $t=m/(n+m)$. The function $f(x) = x^\beta$ is convex, so $s^\beta + t^\beta \geq (s + t)^\beta =1$.}
To get an intuition of how a process may satisfy the GHC, in \Cref{sec:santa-fe} we will show one explicitly based on the \emph{Santa Fe process} \cite{dkebowski2023multiperiodic,dkebowski2025zipf}. The key intuition will be that the GHC is satisfied whenever the ``world'' (whatever is generating the data) contains an unbounded amount of unchanging \emph{facts} that are referenced in the data with a heavy tail distribution. For now, let's assume our process satisfies the conjecture and derive the scaling laws for speed up of a universal search agent.

\subsection{Scaling Laws for Time}

Assume the training data $X_n$ and the inference data $Y_m$ are generated by a process satisfying the GHC: 
\begin{equation}
I(X_n; Y_m) =  m^\beta + n^\beta - (n+m)^\beta.
\end{equation}
We are interested in the case where $X_n$ is the training set, so $n \gg m$, in which case we can approximate:
\begin{equation}
I(X_n; Y_m) \approx m^\beta - \beta \frac{m}{n^{1 - \beta}}\ .
\end{equation}
From \Cref{theorem:information-speed}, the log-speed-up we get from training is exactly $I(X_n; Y_m)$ and $m=T(h)$ is the length of the inference-time trajectory. Therefore we conclude:

\begin{theorem}{Time Scaling Law}{time-scaling-law}
The log-speed up we obtain training on a large enough dataset $D$ of $n$ tokens is:
\begin{equation}
\log \frac{\tau_\nu (h)}{\tau_\nu (h|D)} = T(h)^\beta - \beta T(h)/n^{1-\beta}
\end{equation}
\end{theorem}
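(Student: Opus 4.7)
The plan is to combine the Generalized Hilberg's Conjecture with \Cref{theorem:information-speed} and reduce the statement to an elementary Taylor expansion. Concretely, we start from the GHC identity
\begin{equation}
I(X_n : Y_m) = n^\beta + m^\beta - (n+m)^\beta,
\end{equation}
identify the inference-time trajectory $h$ (of length $m=T(h)$) with the ``future'' block $Y_m$, identify the training corpus of $n$ tokens with the ``past'' block $X_n=D$, and then invoke \Cref{theorem:information-speed} which tells us that the log-speed-up equals $I_\nu(h:D)$, which for the best asymptotic compressor coincides with the algorithmic mutual information $I(h:D)$.

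The main calculation is the regime $n \gg m$. I would factor $(n+m)^\beta = n^\beta\bigl(1+m/n\bigr)^\beta$ and Taylor-expand in $m/n$:
\begin{equation}
(n+m)^\beta = n^\beta + \beta\, n^{\beta-1} m + O\!\bigl(n^{\beta-2} m^2\bigr).
\end{equation}
Substituting this back into the GHC expression, the $n^\beta$ terms cancel and what survives to leading order is
\begin{equation}
I(X_n:Y_m) = m^\beta - \beta\, \frac{m}{n^{1-\beta}} + O\!\bigl(m^2/n^{2-\beta}\bigr),
\end{equation}
so that under the large-dataset assumption $n \gg m = T(h)$ the higher-order terms are negligible. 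Plugging $m = T(h)$ and $D = X_n$ into \Cref{theorem:information-speed} gives the claimed formula.

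The step that requires the most care is the identification of $I(X_n:Y_m)$ (the algorithmic mutual information between the raw past and future data blocks of the process $q$) with $I_\nu(h:D)$, which is what directly governs the speed-up in \Cref{theorem:information-speed}. This is justified by restricting attention to $\nu$ that are asymptotically optimal compressors (for which $\ell_\nu$ tracks Kolmogorov complexity up to additive lower-order terms, as discussed right after \Cref{theorem:information-speed}), and by noting that $h$ is an \emph{optimal} inference trajectory of length $T(h)$, so the information $D$ can provide about $h$ is at most the information it provides about any $m$-token continuation of the process. Everywhere ``='' in the theorem statement should therefore be read as asymptotic equality up to the lower-order terms absorbed in the Taylor remainder and the usual $O(\log)$ slack of algorithmic information theory; once this is acknowledged, the derivation is a two-line computation.
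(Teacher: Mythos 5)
Your proposal matches the paper's own derivation: both start from the Generalized Hilberg's Conjecture, Taylor-expand $(n+m)^\beta = n^\beta(1+m/n)^\beta$ in the regime $n \gg m$ to obtain $I(X_n:Y_m) \approx m^\beta - \beta\, m/n^{1-\beta}$, and then invoke \Cref{theorem:information-speed} with $m = T(h)$. The only difference is that you make explicit the identification of $I(X_n:Y_m)$ with $I_\nu(h:D)$ for asymptotically optimal compressors, a point the paper leaves implicit; this is a welcome clarification, not a divergence.
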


This tells us a few interesting things. First, the speed up is upper-bounded not by a constant (like we previously obtained for simple models) but by $T(h)^\beta$. That is, the longer the trajectory is, the more it is sped up by learning. This makes intuitive sense: if finding a solution required just a few steps, even without any learning we could have brute-forced it quickly. Complex problems are the ones that benefit the most from learning. We also get $O(T(h)/n^{1-\beta})$ convergence to the optimal speed up, so we want the number of training tokens $n$ to be:
\begin{equation}
n \propto L^{1/(1-\beta)}.
\end{equation}
where $L$ is the maximum length of a trajectory we expect to need to solve a problem. That is, we need more training tokens if we expect to solve challenging problems.

The parameter $0 < \beta < 1$ relates to the \textit{complexity} of the task distribution; in particular, it controls how long-tailed the distribution of `useful facts' is, with $\beta \to 1$ implying that the distribution is very heavy tailed. When $\beta$ is high, we need significantly more training tokens to achieve the optimal rate, since there are many more facts that are commonly used. But we also get a better payback, since the speed up $T(h)^\beta$ is also going to be larger.

\begin{remark}{}{}
For natural language, $\beta \approx 0.8$ \cite{takahira2016entropy}, which gives $n \propto L^{1.25}$. So if we are going to generate trajectories of $10K$ tokens, we need $\approx 100k$ training tokens.
\end{remark}

This ratio of test to train data is realistic when fine-tuning a model for reasoning. But when training from scratch it is a clear underestimate. There are a few factors to consider. First, the initial training data is needed to put the weights in a proper configuration, which depends more on the amount of weights than on the information in the training data. Indeed, it is common to pretrain on lower-information content with size proportional to the number of weights. Second, we are assuming that the mechanism generating the test data is the same as the training data, which is not the case. Facts that are useful at test time may appear very rarely in the training set (e.g., if we ask PhD-level questions to a model trained on generic data). Third, the scaling laws are derived under the assumption that we can identify useful facts and memorize them the first time we see them. But realistically, we need to see a fact multiple times to identify it as useful, which inflates the number of required tokens.

\subsection{Memory-Time Trade-Off}

So far we have assumed we can use all information in $X$, but in practice the available memory $M$ may be a bottleneck.
On a dataset of length $n$ there are $k=n^\beta$ facts to memorize, requiring $M = ck$ bits of memory. Replacing $n$ with $M$ using this relationship in \Cref{theorem:time-scaling-law} we get the scaling if memory (rather than $n$) is the bottleneck.

\begin{corollary}{Time-Memory Scaling Law}{}
Assuming memory is used optimally, the speed-up as a function of the used memory is given by:
\begin{equation}
\log \frac{\tau_\nu (h)}{\tau_\nu (h|D)} = T(h)^\beta - \frac{T(h)}{M^{1/\beta-1}}
\end{equation}
\end{corollary}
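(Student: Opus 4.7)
The plan is to derive the memory-time scaling law by direct substitution into Theorem \ref{theorem:time-scaling-law}, once we establish the correct functional relationship between the number of training tokens $n$ and the memory budget $M$ needed to store the useful content extracted from them. The starting point is the existing scaling law
\[
\log \frac{\tau_\nu(h)}{\tau_\nu(h|D)} \;=\; T(h)^\beta \;-\; \beta\, T(h)/n^{1-\beta},
\]
which already captures how training-set size $n$ trades against inference-time speed-up. All that remains is to reparameterize this expression in terms of $M$.

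First, I would justify that the information an optimal learner actually needs to retain from $n$ tokens of GHC-scaling data grows like $n^\beta$, not like $n$. The key input is the generalized Hilberg scaling $I(X_n : Y_m) \propto n^\beta + m^\beta - (n+m)^\beta$: letting $m \to \infty$ (or more precisely taking $m$ large compared to $n$) shows that the total algorithmic information in $X_n$ that is relevant for any future continuation is $\Theta(n^\beta)$. Hence an optimal encoder can compress these ``persistent facts'' into $M = c\, n^\beta$ bits for some constant $c$ depending on the process, and no smaller memory suffices to preserve the full speed-up attainable from $X_n$. This is the ``memory is used optimally'' clause in the statement; I would flag it as the only nontrivial content of the proof.

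Second, I would invert this relationship to express $n$ in terms of $M$, obtaining $n = (M/c)^{1/\beta}$ and therefore
\[
n^{1-\beta} \;=\; (M/c)^{(1-\beta)/\beta} \;=\; (M/c)^{1/\beta - 1}.
\]
Plugging this into Theorem \ref{theorem:time-scaling-law} gives
\[
\log \frac{\tau_\nu(h)}{\tau_\nu(h|D)} \;=\; T(h)^\beta \;-\; \beta\, c^{\,1/\beta - 1}\, \frac{T(h)}{M^{1/\beta - 1}},
\]
and absorbing the constant $\beta c^{1/\beta - 1}$ into the proportionality recovers the stated form $T(h)^\beta - T(h)/M^{1/\beta - 1}$.

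The main obstacle, as already noted, is not the algebra but the justification that $M \propto n^\beta$ is the right memory scaling. One has to argue both an upper bound (an optimal scheme can compress all relevant facts into that many bits, using the Hilberg estimate on $I(X_n : Y_\infty)$) and a matching lower bound (no encoder using fewer bits can realize the full $n^\beta$-order speed-up, by the data-processing inequality applied along $D \to \text{memory} \to h_{\text{new}}$). Everything else is bookkeeping.
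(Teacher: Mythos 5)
Your proposal matches the paper's derivation exactly: the paper likewise posits that a dataset of $n$ tokens contains $k = n^\beta$ facts requiring $M = ck$ bits, inverts to $n^{1-\beta} = (M/c)^{1/\beta-1}$, and substitutes into the Time Scaling Law, absorbing constants. Your additional care in justifying $M \propto n^\beta$ (via the Hilberg estimate and a data-processing lower bound) is more explicit than the paper's one-line assertion, but the route is the same.
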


However, this assumes that we are somehow able to extract from the training data the most useful facts and store them (and only them) in memory. Since we are using an online learning algorithm, the memory also needs to store information about the facts in the training data that we have not yet deemed useful, since we need to wait to see them again to confirm if they are useful.

\begin{proposition}{Online Memory Overhead}{}
An online agent needs a constant factor
\begin{equation}
M_{\text{online}} = C_\beta M_{\text{offline}}
\end{equation}
of additional memory compared to an offline to achieve the same performance.
\end{proposition}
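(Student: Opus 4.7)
The approach is to reduce the online memory question to the classical streaming heavy-hitters problem and exploit the Zipf-like structure implicit in the GHC. Facts drawn from a Santa Fe--like process (cf.\ \Cref{sec:santa-fe}) can be realized as an i.i.d.\ stream in which the $i$-th most frequent fact has probability proportional to $i^{-1/\beta}$; this is the Zipf distribution consistent with Heaps' law giving $k = n^\beta$ distinct facts in $n$ tokens, and therefore with the ``$k$ useful facts'' that determined the offline bound $M_{\text{offline}} = ck$. An offline agent can rank facts by empirical frequency after seeing all $n$ tokens and retain the top $k$, whereas an online agent must commit incrementally, so it has to maintain a candidate pool large enough that the true top-$k$ are never prematurely evicted.

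\textbf{Upper bound.} I would run a Misra--Gries counter set with $m = C_\beta k$ slots: each slot stores a (fact, count) pair; a matching fact increments the count, a non-matching fact claims a free slot when one exists, otherwise all counters are decremented by one. The standard guarantee is that the reported count of any fact underestimates its true count by at most $n/m$. Under the Zipfian law above, the $k$-th fact has true count $\Theta\!\bigl(n\, k^{-1/\beta}/Z\bigr)$ with $Z = \sum_i i^{-1/\beta} < \infty$; here $\beta < 1$ matters precisely because it makes the partition function finite. Choosing $m = C_\beta k$ with $C_\beta$ determined by the ratio of the tail sum $\sum_{i > k} i^{-1/\beta}$ to $k \cdot k^{-1/\bety}$ --- a quantity depending only on $\beta$ --- ensures the slack $n/m$ is strictly smaller than the $k$-th true frequency, so all true top-$k$ facts survive with high probability, yielding $M_{\text{online}} \le C_\beta M_{\text{offline}}$.

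\textbf{Lower bound and main obstacle.} For the matching direction I would use an adversary argument: any online algorithm with $o(k)$ slots must, under the Zipfian prior, evict candidates whose eventual rank it cannot yet distinguish from the top-$k$, so a constant fraction of evictions discard facts that turn out to be useful; by \Cref{theorem:information-speed}, each such eviction costs $\Omega(1)$ bits of log-speed-up, which forces the slot count to grow proportionally to $k$. The main obstacle is aligning the heavy-hitters ranking (by empirical frequency) with the ``useful facts'' ranking that actually minimizes $\ell_\nu(h)$; these coincide only up to constants via the coding theorem applied to the GHC process, and making this identification quantitative is the delicate step. A secondary subtlety is that as $\beta \to 1$ the Zipf tail becomes log-heavy and $Z$ diverges, so $C_\beta$ must be read as a constant for each fixed $\beta < 1$ rather than uniformly bounded in $\beta$.
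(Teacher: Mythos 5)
The paper states this proposition without proof --- it is justified only by the informal remark that follows it (an online learner must also retain facts it has ``not yet deemed useful''), so there is no official argument to compare yours against. Judged on its own terms, your heavy-hitters route has a genuine quantitative gap in the upper bound. With the Zipf law $p(i) \propto i^{-1/\beta}$ and Heaps' law $k = n^\beta$, the fact of rank $k$ has expected count $n \cdot c\, k^{-1/\beta} = n \cdot c\, n^{-1} = O(1)$: the marginal ``useful'' facts at the Heaps boundary appear only a constant number of times in the whole stream. Meanwhile Misra--Gries with $m = C_\beta k$ counters has deterministic slack $n/m = \Theta(n^{1-\beta})$, which is polynomially \emph{larger} than those $O(1)$ counts, not smaller as you claim. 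Your own constant computation (the ratio of $\sum_{i>k} i^{-1/\beta}$ to $k\cdot k^{-1/\beta}$, which is $\beta/(1-\beta)$) is fine as arithmetic, but it does not rescue the sketch: the facts that matter here are precisely \emph{not} heavy hitters, so no frequency-threshold streaming summary with $O(k)$ counters can distinguish them from the tail. Your lower bound ($\Omega(k)$ slots) is also not the content of the statement --- the offline agent already uses $\Theta(k)$ --- so it establishes nothing beyond what is trivial.

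The argument the proposition actually needs is a counting one, in the spirit of the Santa Fe computation of \Cref{sec:santa-fe}. An online agent that cannot foresee which facts recur must cache every fact on first sight; the expected number of distinct facts in $n$ tokens is $U(n) = \sum_k [1-(1-p_k)^n] \approx C'_\beta n^\beta$. An offline (or clairvoyant) agent stores only the subset that is actually exercised at test time or confirmed by recurrence, which under the same Zipf law is a different constant times $n^\beta$ (e.g., the expected number of facts seen at least twice is $\sum_k [1-(1-p_k)^n - np_k(1-p_k)^{n-1}]$, again $\Theta(n^\beta)$). Both quantities scale as $n^\beta$, and their ratio is a constant depending only on $\beta$ --- that ratio is the $C_\beta$ of the statement. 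If you want to salvage your framing, replace the Misra--Gries guarantee with this direct comparison; the heavy-hitters machinery is solving the wrong problem because utility here is not proxied by empirical frequency above a threshold, a difficulty you correctly flagged as ``the delicate step'' but then did not resolve.
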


This reflects a realistic issue: it is easier (i.e.,  faster) to learn from a textbook that gives us directly the useful facts (offline learning) rather than having to `connect the dots' and try to guess the useful facts from online experience.

\subsubsection{Prompting and RAG}

So far we have focused on the speed-up provided by training on a dataset $D$ of past data. What is instead the effect of adding a prompt $p$ to the request? First, note that the key result:
\begin{equation}
\text{speed-up} = 2^{I_\nu (h : p)}
\end{equation}
where the dataset $D$ is replaced by the prompt $p$, remains valid, so the speed up is still determined by the $\nu$-algorithmic mutual information between the prompt and the trajectory.

If the prompt is an \textit{in-context learning} prompt, which provides examples of the task, then the theory is identical to the case of a dataset (effectively the prompt \emph{is} a dataset). However, we expect it to provide much more algorithmic information per-sample than the pre-training dataset $D$, since presumably it will contain only examples directly relevant to the task.

The prompt could also contain information directly relevant to the trajectory, which does not follow a GHC scaling law. For example, if the prompt is a \emph{plan} describing exactly what to do, then:
\begin{equation}
I_\nu (h : p) = \ell_\nu (h) - \overbrace{\ell_\nu (h|p)}^{0} = \ell_\nu (h)
\end{equation}
and we get the maximum possible speed-up, meaning that the time to execute the search becomes merely $\tau_U = T(h^*)$, the minimum possible time required by a trajectory to solve the task.

Alternatively, a prompt $p$ may not specify the whole trajectory, but all the information that it has may be relevant to the trajectory, that is $I(h : p) = \ell_\nu (p)$. In this case, we get a significant speed-up $2^{\ell_\nu (p)}$. For example, just 10 good bits of prompt (a few tokens) can reduce the time to find a solution by $\sim 1024$ times. We can think of this as a useful \emph{hint} (``\textit{Solve the following task. The following technique may be useful: ...}'') that brings down the time to solve a problem from hours to minutes.

\subsection{Example of GHC scaling: Santa Fe process}\label{sec:santa-fe}

So far we have assumed that our data generating process satisfies the Generalized Hilberg's law scaling $I(X_n : Y_m) = n^\beta + m^\beta - (n+m)^\beta$, and we anticipated that this relates to having an infinite distribution of facts that appear in the data following a long-tail distribution. Following \cite{debowski2011vocabulary}, we now explicitly construct such a process, showing that the GHC scaling definition makes sense and how exactly `facts' relate to scaling.

Let $\{Z_k\}_{k=1}^\infty \sim \text{Bern}(1/2)$ be an infinite set of binary properties which are sampled \emph{before} any text is generated. We can think of them as facts about the world, which may be referenced in the text. Importantly, since the $Z_k$ are sampled only once at the very beginning and do not change over time, once a fact is first encountered, we know its value in any future text.\footnote{An alternative view is that the process is extremely long memory: after the first time it generates a value for $Z_k$ it remembers it and reuses at all later times.} Some facts are referenced very often, others very rarely. Empirically, natural frequencies are well captured by a Zipf power law:
\begin{equation}
p(k) = c k^{-1/\beta}
\end{equation}
for some normalization factor $c$.

To generate a sequence $X$, we concatenate the index of a random fact and its value:
\[
X = ((k_1, Z_{k_1}), (k_2, Z_{k_2}), \ldots, (k_n, Z_{k_n}))
\]
where $k_i \sim p(k)$. We generate $Y$ similarly.

\begin{theorem}{Santa Fe Process GHC Scaling}{}
The Santa Fe process described above follows GHC scaling:
\[
I(X_n : Y_m) = n^\beta + m^\beta - (n+m)^\beta
\]
\end{theorem}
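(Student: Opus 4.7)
The plan is to reduce $I(X_n:Y_m)$ to a counting identity involving the expected number of distinct facts referenced, and then extract $N^\beta$ growth from the Zipf tail. The only source of correlation between $X_n$ and $Y_m$ is the shared pool of facts $\{Z_k\}$, so the first step is to make this precise.

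\textbf{Step 1: Reduction to shared fact counts.} I would exploit the independence between the index draws and the facts. The indices $k_i$ are i.i.d.\ from $p$ and drawn independently in the two sequences, while each $Z_k$ is an independent Bernoulli$(1/2)$, so each distinct index queried contributes exactly one uniform bit on top of the index entropy. Let $K_X, K_Y$ denote the sets of distinct indices appearing in $X_n$ and $Y_m$ and let $H(p) = -\sum_k p(k) \log p(k)$. A direct computation gives
\[
H(X_n) = n H(p) + \E|K_X|, \quad H(Y_m) = m H(p) + \E|K_Y|, \quad H(X_n, Y_m) = (n+m) H(p) + \E|K_X \cup K_Y|.
\]
Setting $f(N) := \E|K_X|$ for $N$ i.i.d.\ draws, inclusion--exclusion ($|K_X| + |K_Y| - |K_X \cup K_Y| = |K_X \cap K_Y|$) yields
\[
I(X_n : Y_m) \;=\; f(n) + f(m) - f(n+m), \qquad f(N) \;=\; \sum_{k \ge 1}\bigl[1 - (1-p(k))^N\bigr].
\]

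\textbf{Step 2: Zipf asymptotics of $f(N)$.} With $p(k) = c k^{-1/\beta}$ and $0 < \beta < 1$, I would Poissonize $(1-p(k))^N \approx e^{-N p(k)}$ and replace the sum by the integral $\int_0^\infty (1 - e^{-Nc x^{-1/\beta}})\,dx$. The substitution $u = N c\, x^{-1/\beta}$ rewrites this as $\beta (Nc)^\beta \int_0^\infty (1-e^{-u}) u^{-\beta-1}\,du$, and an integration by parts reduces the $u$-integral to $\Gamma(1-\beta)/\beta$. The upshot is $f(N) = C\,N^\beta (1+o(1))$ with the explicit constant $C = c^\beta \Gamma(1-\beta)$.

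\textbf{Step 3 and main obstacle.} Substituting $f(N) \sim C N^\beta$ into the identity from Step~1 gives $I(X_n : Y_m) \sim C\,(n^\beta + m^\beta - (n+m)^\beta)$, which is the claimed GHC scaling up to the overall constant absorbed in the `$\propto$' of the definition. Step~1 is essentially bookkeeping once one notices the one-bit-per-distinct-index contribution. The delicate step is Step~2: justifying the Poissonization uniformly in $k$, and controlling the error in the sum-to-integral passage. I would split the sum at a threshold $k_0 \sim N^\beta$ (the scale at which $N p(k) \asymp 1$): for $k \le k_0$ the summand saturates to essentially $1$ and contributes $k_0 \sim N^\beta$, while for $k > k_0$ the approximation $1-(1-p)^N = 1 - e^{-Np} + O((Np)^2)$ is valid and an Euler--Maclaurin comparison against the Gamma integral bounds the remainder. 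This two-scale decomposition is the only real technical work; everything else is algebra.
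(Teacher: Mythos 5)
Your proposal is correct and follows essentially the same route as the paper's proof: both decompose $I(X_n:Y_m)=H(X_n)+H(Y_m)-H(X_nY_m)$, attribute one bit per distinct fact index on top of the index entropy $H(p)$, and reduce everything to the unique-fact count $U(N)=\sum_k\bigl[1-(1-p_k)^N\bigr]\approx C_\beta N^\beta$. Your Steps 2--3 merely supply the Poissonization and Gamma-integral details that the paper leaves implicit in its ``$\approx C_\beta n^\beta$'' step, which is a welcome but not structurally different addition.
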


\begin{proof}
We now want to show that for this process $I(X_n; Y_m)$ follows GHC scaling. Let's first rewrite:
\[
I(X: Y) = H(X) + H(Y) - H(X Y).
\]
To compute the compression cost $H(X)$ of $X$, the key observation is that we only need to encode the value of a fact the first time we see it (since it remains constant). How many unique facts appear in $X$? Asymptotically, this is given by:
\[
U(n) = \sum_{k=1}^\infty [1 - (1 - p_k)^n] \approx C_{\beta} n^\beta.
\]
Using this, the compression cost $H(X)$ is the cost of encoding the $n$ random indices (using $H(p)$ bits per index), plus the cost of encoding the $U(X)$ unique properties:
\[
H(X) = n H(p) + C_{\beta} n^\beta.
\]
The cost of $H(Y)$ and $H(XY)$ are computed similarly. Putting all together we get the desired result:
\begin{align*}
I(X; Y) &= H(X) + H(Y) - H(XY)\\
&= [n H(p) + C_{\beta} n^\beta] + [m H(p) + C_{\beta} m^\beta] - [(n + m) H(p) + C_{\beta} (n + m)^\beta]\\
&= C_\beta [ n^\beta + m^\beta - (n+m)^\beta ]
\end{align*}
\end{proof}

This construction is quite artificial -- real world data are not a stream of random facts, indices, and immutable binary values. But it does highlight in a simple way some key, and more fundamental, issues: First, the process has \emph{infinite complexity} since it needs to store the value of all the $Z_k$. Recall that by \Cref{theorem:max-speed-up}, this is the exactly the setting where learning can make the most difference, since we can obtain arbitrarily large speed-ups.
Second, the distribution of facts has to follow a power law. Some facts (\emph{``the color of the sky''}) are more referenced than others (\emph{``the house number of John Doe''}) and power laws are abundant in real data and have several theoretical justifications (rich-get-richer effect, least-effort/max-entropy trade-off, monkey-typing with intermittent silences \cite{miller1957some,mandelbrot1954structure}, etc.).

In terms of reasoning traces, one may think of `facts' as \textit{functions} or \textit{theorems} that one may invoke by calling their name (index). Since functions/theorems are reused and always remain constant, after memorizing them we can significantly compress future reasoning. 
This intuition has  motivated a significant volume of research, both regarding universal solvers \cite{solomonoff1984optimum,schmidhuber1997shifting,schmidhuber2013powerplay,franz2015toward} as well as program synthesis \cite{muggleton1991inductive,liang2010learning, ellis2023dreamcoder,grand2023lilo} and 
reinforcement learning \cite{sutton1999between,lin2023program}.

\section{Inversion of Scaling Laws}

So far we have established the fact that learning from data leads to a speed-up in finding a solution to an unforeseen task. However, the equivalence
\begin{equation}
\log (\text{speed-up}) = I_\nu (h : D)
\end{equation}
tells us something stronger: we learn to be faster \emph{if and only if} we learn from data. This suggests that we can learn something from data if and only if we train with a time optimization objective.

Let's work through an example. Suppose we want to train a universal solver, and (as is natural) we use as reward function the expected number of correct solutions, determined by a function $R$, averaged over some distribution $f \sim q$ of tasks:
\begin{equation}
L = \mathbb{E}_{f\sim q} \mathbb{E}_{h \sim \nu}[R(h)].
\end{equation}
Further suppose that our agent has unlimited compute power available, so that we have no need to optimize resources over their usage. What will such a system learn?

If the distributions of task is generic enough, we know that the system has to learn to perform universal computation (\Cref{theorem:universal-computation}). But that is \emph{the only} thing that it needs to learn. Having universal computation, it can implement the basic Solomonoff-Levin Universal search algorithm, which will always find the solution to the task, thus achieving maximum reward. It will take eons to find the solution, but since compute is free for this agent, that is not a problem.

To further clarify, suppose we want to teach the model to play chess. Training is not necessary to achieve a better reward, since a standard tree-search over all the possible moves will eventually find the best move to make. Training is required only to \emph{reduce the time} that it takes to find the best move.

\begin{remark}{Only time bound systems learn}{}
If a system is not penalized for the time it takes to find a solution to the task, it is optimal to always brute-force a solution without learning anything. Vice versa, any system that optimizes time has to learn at least $I_\nu (h : D) = \log (\text{speed-up})$ bits of information from the data.
\end{remark}

Going more in depth, we can look at how we expect $I_\nu (h; D)$ to behave as we scale the model. First, note that as we scale the maximum time allowed for a trajectory, we also usually want to jointly scale the amount of weights of the model \cite{kaplan2020scaling}. So if $T$ is the maximum time for the trajectory, the number of weights will be some monotone function $|\theta| = f(M)$. Note that the number of weights puts a constraint on the maximum amount of information $I(\nu : D)$ about the data that we can store in the model parameters. But since $I_\nu (h : D) \leq I(\nu : D)$, this also puts an upper-bound on the per-trajectory information $I_\nu (h: D)$. Next, let's look at how much information the model is forced to capture if it wants to have perfect performance on the task. We need $\tau_\nu (h|D) \leq T$ so we need to store enough information to speed up the search until it takes less than $T$ total time. This means:
\begin{align}
\tau_\nu (h|D) &= \frac{\tau_\nu (h)}{2^{I(h : D)}} \leq T \\
\Rightarrow I(h:D) &\leq \log \tau_\nu (h) - \log T
\end{align}
so as expected from the discussion before, the amount of information we \emph{need} decreases as $T$ increases. Putting the two bounds together we obtain the curve for $I_\nu (h:D)$ shown in \Cref{fig:inversion-curve}.

\begin{figure}
\centering
\includegraphics[width=0.5\textwidth]{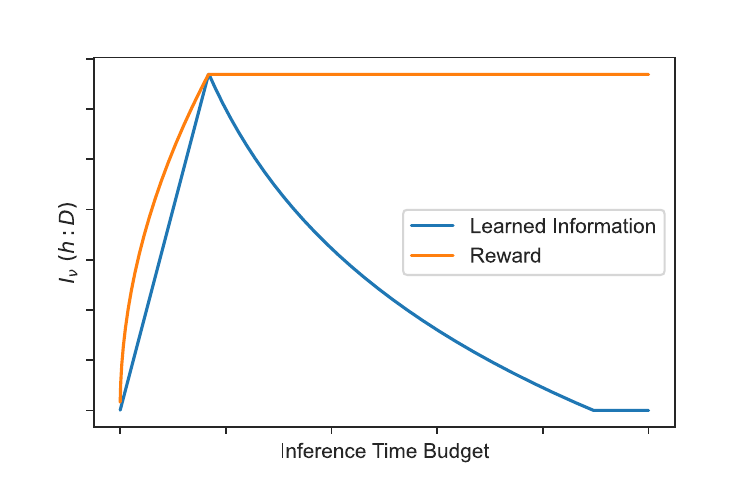}
\caption{\label{fig:inversion-curve} Simplified illustration of expected scaling law inversion curve. Consider wanting to solve a task --- initially requiring time $L$ --- in less than $T$ reasoning steps. Further assume we model capacity (number of weights) with the time budget $T$.  Initially, the model stores as much information as its weights allow to reduce the required time from $L$ to less than $T$. Once $I(h:D) = \log \frac{L}{T}$ bits are stored, he task is solved exactly (the reward is maximized). Beyond this point, increasing $T$  may actually reduce stored information, since less is needed to meet the time budget. At $T=L$, no information needs to be stored to achieve perfect reward.
}
\end{figure}

As we scale the model, when the inference time budget and the number of weights are small, the model learns as expected, acquiring information from the data, storing it into the weights. The expected reward it obtains steadily increases as more problems become solvable within the allotted time budget. At some point, the amount of information the weights can store is large enough that, thanks to the speed up, all trajectories are solvable within the time budget. At that point the reward is always optimal and stabilizes. But, paradoxically, if we further increase the time budget the model can use brute force search more, and the information it needs to acquire starts \emph{decreasing} until it reaches zero. This is the \textit{savant regime}, where the model can default to expensive brute force search with no learning whatsoever, yet still achieving optimal reward. In this regime,  optimal performance (orange curve) comes from excess capacity rather than ``insight,'' as measured by learned algorithmic information (blue curve).

To avoid entering the savant regime, one option is to penalize time using a reward like:
\begin{align}
\mathcal{R} &= R(x, y) - \lambda \log \tau_\nu (x \mapsto y)\\
&= R(x, y) + \lambda I_\nu (h : D)
\end{align}
Which forces the model to actually learn algorithmic information. 
The training of many current large reasoning models \cite{guo2025deepseek,yu2025dapo} includes similar objectives that maximize reward while minimizing inference time. The results above suggest that this regularization not only reduces inference cost---a common motivation for penalizing time---but also improves the quality of what the model learns.

Note the sharp contrast with information theoretic regularizers. In a classic machine learning setting, we want to maximize \cite{achille2018emergence}:
\begin{equation}
\mathcal{R}_{\text{reg}} = R(x,y) - \alpha I(w : D)
\end{equation}
That is, we want to \textbf{minimize} the amount of information in the weights \cite{achille2019information}. This ensures generalization, and is the basis of the Minimum Description Length (MDL) principle. For reasoning, however, we want to \textbf{maximize} the information in the weights in order to minimize time.
In transduction in a verifiable setting there is no issue of generalization, since we have access to all relevant data and a verifier can that tells us whether the task has been solved. More generally the trade-off is not trivial.

Consider a real biological agent that has to model a physical environment in order to act. The agent may learn the correct, generalizable, underlying physical rules which would be optimal from an MDL perspective. But if those rules involve a large amount of computations, the agent may not be able to use them in a reasonable time for survival. It may instead be optimal for the agent to memorize several case-by-case rules (e.g.,  a feather falls with a certain speed) even if they do not generalize (a feather doesn't fall with the same speed in a vacuum), if they allow it to quickly come up with approximate, but timely, estimates that are better for survival than the best estimate rendered belatedly.

It is common to refer to transduction as \textit{System-2} behavior and induction (restricted to a single forward pass) as \textit{System-1} behavior \cite{kahneman2011thinking}. Encoding reasoning traces into the weights can then be thought as \textit{automatization}: the common thinking pattern are made faster by moving them to System-1. Whether this is advantageous depends on environmental stability. In stationary environments, automatization allows faster reaction and better energy usage. However, in time-varying environments, the ability to reason at inference time cannot be fully replaced by a set of fixed learned behaviors. 

\section{Maximizing a Continuous Reward}
\label{sec:continuous-reward}

So far we have studied universal solvers under the assumption that the reward function $R(x, y)$ has a binary ``success/failure'' value. In the more general case, the universal solver is required to maximize a continuous reward function. If we know the maximum achievable reward $R_\text{max}$, and we know that the reward can be achieved, we can define a new binary reward function ``has achieved the maximum,'' thus reducing to the binary case. However, generally we do not know what is the achievable maximum and, more importantly, we do not know if the search cost to find a better solution will be worth the value. That is, \textit{a practical universal solver has to decide when to stop}.

The objective underlying this decision is optimization of the user's net gain. Consider a universal solver $U$ that over time outputs various candidate solutions $y_t$, and let $R^* = \max R(x, y_t)$ be the maximum reward achieved by any solution up to now. The value obtained by the user stopping at this point is:
$$J = R^* - \lambda T_{\text{total}}$$
where $R^*$ represents the best reward achieved across all explored traces, $\lambda$ is the per-token cost, and $T_\text{total}$ is the total compute time consumed until now. The question is whether investing additional compute to search for a solution is likely to improve $J$.

Deciding whether a reward function can be improved, let alone whether it is convenient to do so, is generally undecidable. Hence, we need to assume we have learned a \textit{forecasting model} $\psi_\theta$ to estimate the distribution of rewards for each potential continuation $h$, enabling principled decisions about which reasoning paths to pursue \cite{kleinman2025e1}. Under this assumption, we can formulate the search for optimal solutions as a \textit{Pandora's box problem with order constraints} \cite{weitzman1978optimal,boodaghians2020pandora}.

\textbf{Universal search as Pandora's box problem.}
We can visualize all possible computations to solve a task as a rooted tree \(\mathcal{T}\). A node $n$ represents a partial reasoning trace; exploring an extension of a reasoning trace (i.e.,  a child of $n$) for $t$ tokens incurs a cost $c =\lambda t$ and, if we terminate after the extension, yields a terminal reward
\(R\sim \psi_\theta(\cdot\mid x,z)\).
Nodes naturally obey order constraints: a child may be explored only after its parent.

Given the current best obtained reward $R^*$, the \emph{incremental value} of exploring a new child node $n$ is
\begin{equation}
\Delta(n) \;=\; \mathbb{E}\!\left[(R - R^\star)_+\right] \;-\; \lambda t.
\label{eq:delta}
\end{equation}
While we could greedily pick the next computation to perform by maximizing \(\Delta\), this generally leads to suboptimal solutions. Searching for the best possible strategy is, in principle, exponentially complex. However, Weitzman \cite{weitzman1978optimal} showed that a simple greedy strategy does exist to optimally solve this problem. In particular, Weitzman defines a \emph{conditional reservation value} for each candidate extension as the unique $z_n$ solving
\begin{equation}
\mathbb{E}\!\left[(R - z_n)_+\right] \;=\; \lambda t.
\label{eq:reservation}
\end{equation}
The resulting value of $z_n$ is called the \textit{Gittins index} of the node. The provably optimal policy is then to visit at each step the unexplored node which has the currently highest Gittins index. It is instead optimal to \textit{terminate} the search when:
\begin{equation}
R^\star \;\ge\; \max_{n\in \text{unexplored}} z_n
\label{eq:stop}
\end{equation}
at which point there is no node that we can visit that is expected to improve the final objective $J$. That is, the expected improvement in reward does not compensate the cost of exploring the node.

This framework can be extended \cite{boodaghians2020pandora} to the case when there are constraints on the order of opening the boxes (e.g.,  testing a solution obtained after 1,024 thinking tokens requires first reaching 512 tokens), in which case the value of a box also needs to take into account the value of the boxes it allows access to.

\textbf{Making decisions with Gittins Indexes.}
Once we use the Gittins index $z_i$ using our forecasting model, we have a simple criterion to make several key decisions during our search. For example, $z_i$ allows us to decide: (i) when to \textit{continue extending a reasoning trace} (if the Gittins index of a child is the highest); (ii) when to branch a trace (if the child of a parent node achieves the highest index), in particular when it is optimal to restart from scratch by sampling a new reasoning trace), and (iii) when it is optimal to stop attempting to improve the solution of a problem and return the current best (when no node has a better index than the current reward).

\textbf{Need for a Forecasting Model.} Exploring greedily based on Gittins indexes remain optimal as long as: (i) the reward distribution is known in advance, and (ii) the distribution of different boxes are independent (the reward observed for one box does not affect the predicted reward on others). When this is not the case, using Gittins indices may not be optimal, but generally remains a strong policy \cite{scully2025gittins}.

In general, however, we do not know \textit{a priori} the distribution of possible rewards we may obtain. For example, we can't know in advance %
that thinking for 1,000 or 10,000 tokens will lead to finding a correct solution. In order to efficiently optimize the net reward $J$ an agent also has to learn to \textit{forecast} both the cost of an exploration attempt, and the probability of it improving over the current best solution \cite{kleinman2025e1}.

\section{Maximalistic Models of Computation}
\label{sec:maximalistic}

The scaling law for speed discussed in \Cref{sec:scaling-laws} describes the gains obtainable by assigning higher probability to common patterns learned from past reasoning traces. However, this is not yet the maximum speed-up we can achieve. Consider for example a universal task solver that is frequently asked to solve tasks requiring the computation of a Fast-Fourier Transform (FFT). Once it learns from past data the correct algorithm to compute an FFT, it can instantiate that procedure each time, significantly reducing exploration cost. However, it still must execute hundreds of algorithmic steps -- thousands of ``thinking tokens'' for an LLM -- to reach a solution. This is far from optimal: a better solution would be to compute the FFT in a single step, either by modifying the weights to implement the relevant circuitry or by invoking a specialized tool.

Two forces drive such further speed gains: \textit{automatization} and \textit{abstraction}. In automatization, as we have discussed, skills migrate from costly System-2 processes to fast System-1 routines. Instead of re-deriving physical laws, the model learns stable approximations of relevant dynamics; instead of re-searching for an FFT, it executes a compact surrogate. In abstraction, long chains of reasoning are compressed into atomic operations.
For example, rather than re-proving a property whenever needed, the model introduces a lemma --- an invocable unit that replaces a lengthy sub-derivation.

Both phenomena can be understood as expansions of the model’s effective instruction set. In a CPU, each executed instruction modifies the internal state, and a carefully constructed minimal instruction set (RISC)\footnote{Reduced Instruction-Set Computer} suffices for universality. However, some long patterns of operations may appear frequently. This motivates the introduction of complex instructions (CISC) that, even if technically redundant,  can perform the same long state transition in a single operation, trading additional real estate (e.g., area) for reduced latency. That is, space redundancy is used to buy time minimality. Similarly, each generated token in a reasoning LLM updates its internal state. The system may be universal even with simple states and transitions. Yet, instead of emitting hundreds of primitive tokens, we can introduce specialized tokens/dynamics -- or dedicated callable tools -- that let the model `jump' directly to the same final state without having to run through the individual steps.

This introduces a familiar trade-off. Adding tokens, tools, or weight-level circuitry increases parameter count and training complexity, and raises orchestration and verification burden. At the same time, it can markedly reduce wall-clock latency, exploration cost, and the memory pressure of long reasoning traces. For modern LLMs, the balance differs from classical CPUs: scaling parameter count is comparatively straightforward and efficient, while long sequential chains of thought are expensive, hard to parallelize, and brittle with respect to context-length and recall.

These constraints suggest a natural design pressure toward \textit{maximalistic models of computation}: computational engines equipped with rich tool libraries, learned subcircuits for frequent subproblems, and token types representing complex operations. Concretely, this includes (i) a library of skills—APIs, theorem banks, solvers—with learnable dispatch; (ii) token or type extensions that encode compound operators; and (iii) fine-tuned subnets that implement high-value routines (e.g.,  parsing, algebraic transforms, approximate simulation). Training signals should reward short-horizon solutions and penalize unnecessary long-form reasoning when a reliable jump exists.

Maximalistic models of computation stand in contrast with \textit{minimalistic} ones, such as Turing Machines, which are designed to operate with the smallest possible instruction set on the smallest possible dictionary through a computer with the smallest number of components. Such simplicity makes sense if the goal is analysis by humans. However, beyond interpretability, this pressure towards simplicity does not reflect the structure of the world that an AI Agents must interact with. Such world instead engenders pressure towards speed, which can be optimized by increased complexity.

Maximalism is not mere memorization. Automatized routines and abstracted lemmas are structured compressions of procedures, not rote patterns. They improve reliability and latency on recurring structures (FFT, parsing, algebraic simplifications), while remaining fallible on rare, non-stationary, or adversarial cases where general reasoning must reassert itself. Selecting which operations deserve ``instruction status'' ---  balancing parameter growth against step reduction --- and validating the safety of state jumps is a key problem for reasoning LLMs.

As minimizing reasoning-trace length becomes the dominant objective, we should expect models to shift toward maximalistic computation, mirroring the biological transition from System-2 deliberation to System-1 fluency. The path to greater speed is not only better search, but better \textit{steps}.

\section{Discussion and related prior work}
\label{sec:discussion}

The subject matter of this paper falls within the scope of statistical machine learning. Traditionally, one starts with instantiated data which implicitly define \textit{the task}, and arrives at a model that performs the same inference computation on all future instances of the same task. However, once trained models are used as generative distributions, as customary in generative AI, they exhibit behaviors that were not explicitly encoded in the training data nor the chosen loss. A key such behavior is the ability to perform variable-length inference computation, which increasingly often leads to solving previously unseen tasks. In classical (inductive) machine learning, there is no feedback mechanism at inference time, so one can only evaluate the quality of a model \textit{post-hoc}, typically on data other than the ones at hand. Agents, on the other hand, interact with the environment, which provides feedback, and/or can call tools to solicit feedback or spawn simulations at inference time prior to rendering an outcome.
Inference computation can therefore adapt depending on the resulting feedback. This mode of interaction calls for a different approach to learning, which aims to empower \textit{transductive inference}. The power of LLMs stems from the fact that, despite being trained \textit{inductively} (despite the name, so-called unsupervised pre-training is simply next-token prediction, is a standard multi-class supervised classification problem), they operate transductively leveraging their chain-of-thought. In this paper, we explore the foundational principles of such \textit{transductive learning}, and its limits, including bounds and power laws. 

\textbf{Transduction, In-context Learning, and Solomonoff Inference.}
Transduction in the form of learning jointly from labeled training examples and unlabeled test samples was championed by Vapnik \cite{chapelle1999transductive, vapnik2006transductive, gammerman2013learning}.

The dichotomy we present between transductive and inductive inference has long been considered in machine learning under various guises, such as the distinction between learning to generalize'' and learning to reason'', between System~1 and System~2 \cite{kahneman2011thinking,geffner2018model}, between \textit{fluid intelligence} and \textit{crystallized intelligence} \cite{hernandez2017measure}, or between learners and solvers \cite{geffner2018model}. Our objective is not to further justify the need for transduction or reasoning, which has amply been done, but rather to contextualize it through the lens of LLMs as computational engines and to provide scaling laws connecting time-to-solve with the learning of algorithmic information.

An early observation was that language models can learn multiple tasks implicitly through the unsupervised language modelling objective \cite{radford2019language}, and exhibit diverse behaviors when adequately prompted.  In-context learning \cite{dong2022survey}, which is a form of transductive inference, introduces demonstration examples into the model context to elicit desired behavior.  It has been shown that LLMs can perform optimization algorithms such as gradient descent and and ridge regression transductively at inference time from in-context examples  \cite{akyurek2022learning}.  \cite{garg2022can} demonstrates that sparse linear functions, decision trees, and two-layer networks can be learned in-context.    \cite{wu2023many} investigates what minimal pretraining is necessary to induce in-context learning, showing that a small pretrained model can achieve close to the Bayes optimal algorithm.  \cite{li2023transformers} introduce a theory of in-context learning where a hypothesis is formed at inference time, obtaining generalization bounds.  \cite{JMLR:v25:23-1042} demonstrates that a single self-attention layer trained by gradient flow to perform in-context learning converges to a global minimum that is competitive with the best predictor on the test distribution.  The connection between in-context learning and Solomonoff inference was identified in \cite{grau2024learning}, where the authors attempt to learn the Solomonoff semimeasure directly by sampling programs and training on their outputs.  In \cite{deletang2023language}, motivated by the inductive theory of Solomonoff \cite{solomonoff1964formal1,solomonoff1964formal2,solomonoff1999two}, the authors demonstrate that LLMs can outperform general purpose compressors, even for audio and visual data.  The connection between Solomonoff induction and neural network optimization as a form of program search was mentioned earlier in \cite{kaiser2015neural}.

As we have noted in the introduction, time plays no role in Solomonoff Inference, nor in in-context learning. Neither involve actual ``learning'' in the classical inductive sense: The weights are fixed and the same task, presented in-context multiple times, requires repeating the same effort to yield no different outcome each time. However, time plays a key role in \textit{learning transduction}, which is the core motivation of this work.

\textbf{Systematicity of Dichotomies.}
Several conceptual pairs recur throughout this work (\Cref{tab:dichotomies}), which can, among other interpretations, be seen as projections of a the distinction between inductive and transductive inference onto different aspects of a learning system.

In inductive inference, the goal is to compress past observations into a fixed map that generalizes to similar future inputs. The bottleneck is representational capacity (space/parameters); success is measured by accuracy on held-out data; simplicity aids generalization; and memorization of training-specific detail is harmful. Computation is a single feedforward pass (System~1), interpolating among previously seen patterns. Space-pressure (amount of information stored) is important to learn generalizable features rather than  memorizing past results. A prototypical task is language modeling: reduce uncertainty on the next token given the statistical structure of past text.

In transductive inference, the goal is to reason about a \textit{specific} instance at inference time, guided by a verifier or reward signal. The bottleneck is computational budget (time/tokens); success is measured by speed to a verified solution; and memorization of reusable algorithmic structure is beneficial (\Cref{theorem:information-speed}). Computation is variable-length chain-of-thought (System~2), extrapolating beyond the training distribution to solve novel tasks. Time pressure is essential for learning to occur (\Cref{theorem:learning-time}) and avoid savantry. A prototypical task is theorem proving: search for a proof that satisfies a checker. 

\begin{table}
\centering
\small
\renewcommand{\arraystretch}{1.25}
\begin{tabular}{@{}l||l|l@{}}
\toprule
\textbf{Mode of inference} 
& \textbf{Inductive} & \textbf{Transductive} \\
\midrule
{\bf Bottleneck} & Space (parameters) & Time (tokens/compute) \\
{\bf Transferability} & Interpolation & Extrapolation \\
{\bf Memorization} & Harmful (overfitting) & Beneficial (speed-up) \\
{\bf Inference mechanism} & System~1 (fixed pass) & System~2 (variable reasoning) \\
{\bf Sample task} & Language modeling & Theorem proving \\
{\bf Scaling } & Accuracy vs.\ size & Speed vs.\ size \\
{\bf Information objective} & \underline{Minimize} $I(w : D)$ & \underline{Maximize} $I(h : D)$ \\
\bottomrule
\end{tabular}
\caption{Dichotomies outlined in this work can be viewed as projections of the induction--transduction distinction. Each row contrasts the same system property under the two regimes.}
\label{tab:dichotomies}
\end{table}

The two regimes are not mutually exclusive. Automatization (\Cref{sec:maximalistic}) moves frequent reasoning patterns from the transductive to the inductive regime, trading space for time. Conversely, when the environment changes, previously automatized behaviors must be re-derived transductively. A complete agent must balance both, and the optimal operating point depends on the stability of the environment and the cost of compute.

{\bf LRMs, SLMs, VLMs, VLAs, World Models, etc. (nomenclature).}
The term LLM originally referred to large-scale Transformer-based models (pre-)trained as next-token predictors using large-scale corpora of natural language, then co-opted as probability distribution to sample new natural language text. Optionally, these models could be fine-tuned by scoring such generated expressions using human preference, an external reward mechanism, or by the model itself through self-assessment. It is also common to call the outcome of the same process a `World Model' (WM) if trained on sensory data such as video or audio instead of natural language, or  `vision-language model' (VLM) if trained on both, or 'vision-language-action' model if the output expression is used to issue commands to an actuator, or `large reasoning models' (LRMs) if they are used to generate variable-length trajectories prior to rendering the decision or action. In our nomenclature, any large-scale predictor trained on sequential data with latent logical/linguistic structure (with objects, relations, functions, etc.) develops an inner state space with an internal ``Neuralese language'' \cite{trager2023linear}. Sensory data are replete with latent discrete entities \cite{ayvaci2011detachable}, their relations (topological, geometric, dynamic, semantic) and (de)composition into parts (meronomies) or abstract concepts (taxonomies). In our definition of LLM, therefore, where `language' is not restricted to natural language and instead refers to any form of Neuralese, VLMs, WMs, VLAs, LRMs and other variants are also LLMs. We also include in the term LLMs models that use different architectures, so long as they have a `state' (memory), whether explicit (as in state-space models) or implicit by co-opting a sliding window of data, as in autoregressive Transformers \cite{zancato2024b}. Since the largest LLMs at this point comprise trillions of parameters, some now refer to models with merely billions of parameters as `small language models' or SLM. Obviously, `small' is subjective, and these models have no architecture, structural, functional, or conceptual difference from their `large' counterpart, so they too are just LLMs. Empirically some emergent phenomena are only observed at scale, but this does not mean that there is a clear divider between `large' and `small', even phenomenologically since smaller models can still be distilled from larger ones and maintain their behavior even if it would not have emerged from cold-start using the same training protocol \cite{guo2025deepseek}.

{\bf Embodied AI.} The results described in this paper pertain to both software agents that exist within the world of bits, as well as embodied agents that interact with the physical environment. While this may seem counter to the forced dichotomy between LLMs and so-called World Models, once the sensory data is tokenized the two become the same mathematically. Regardless of how a model is trained inductively, once it acquires the ability to perform transductive inference, it can act as an agent. This could be in the world of bits, where interaction with the surrounding environment is through APIs and function calls, or in the world of atoms, where sensors provide measurements that the agent turns into a representation of the environment (which is an abstract concept finitely encoded in the state of the agent \cite{achille2022learnability}) and operate on it (i.e.,  reason), to command physical actuators that affect the  the environment. Such an environment then provides a feedback signal, ultimately in the form of ``verification'' (e.g.,  survival or rewards). The reasoning agents exists in a finitely-encoded world, and interfaces with the physical world through encoders and decoders. The core of all these agents is the ability to perform transductive inference within the discrete/discretized representation, which requires computation as described in this document. While evolution proves that processing sensory data is sufficient to foster the emergence of reasoning, language is already conveniently distilled (symbolized and compressed) making the traversal of the evolutionary path unnecessary for the emergence of reasoning. In this sense, agentic AI subsumes embodied AI, where the latter focuses on the source of the data (sampled physical sensory measurements) and focuses on physical actuators commands (physical action). 

{\bf Universal Computation, Universal Search, Algorithmic Information.}  Theoretical Computer Science has devoted decades to the development of universal algorithms; indeed, Levin's paper that introduced his universal search algorithm seeded a large portion of the subsequent literature on computational complexity theory. Since we only use the concepts driving universal search, we do not review this sizable body of work here and refer the reader to any textbook on complexity theory. One exception we make is to comment on the literature of Kolmogorov Complexity and Algorithmic Information Theory, which is thoroughly covered in textbooks \cite{li2008introduction}. 

While Kolmogorov's theory is useful for certain asymptotic analysis, and we make heavy use of it in this manuscript, it is worth pointing out that it fails its original intent to canonically separate ``information'' from ``randomness' with the concept of the Structure Function. In our approach, the information in a trained model is specific to the model, hence subjective.

{\bf Reinforcement Learning.} Our formulation---searching for $y^*(x) = \arg \max_y f(x,y)$ given a score function $f$ --- superficially resembles reinforcement learning (RL), where an agent maximizes cumulative reward through interaction with an environment. However, the goals differ in a key way. Standard RL seeks to learn a fixed \textit{policy} for a \textit{particular} environment. We instead want a universal solver to craft a \textit{new} policy for tasks no agent has previously encountered. Put differently, a particular RL problem is one possible task $f$ that our universal solver could be asked to address, but the solver itself is not specialized to any single such task. While RL considers time-penalized objectives, the time RL tries to minimize is the number of actions or observations, while we try to minimize the compute time used by the agent to \textit{decide} which actions to take or to process the observation. The theoretical questions we focus on---how learning compresses the multiplicative constant $2^{\ell(A)}$ of universal search, and how scaling laws govern the resulting speed-up---are orthogonal to the design of any particular RL algorithm, which is why we do not review that extensive literature here.

{\bf Markov Chains.} We use general dynamical systems as a model of computation, but arguably the most important piece is the transition probability $\nu(s_{t+1} | s_t)$, which defines a Markov Chain. One may wonder why we need to introduce such general machinery, including `proper time,' when standard concepts from Markov Chains, such as the expected hitting time, would suffice.
As noted, however, hitting time could be made arbitrarily small or large without fundamentally changing the computations performed. 
We also note that since AI Agents interact with the unknown environment, they are not closed systems describable with a Markov chain, but can still be described as a dynamical system.
We also note that our use of dynamical systems or Markov structure is restricted to modeling the \textit{computations of the agent} solving the task, not on the data itself. In fact, we stay clear of ever making any assumption on the data generating distribution aside from it satisfying the Hilberg's law, which defies the Markov assumption. We also note that, while ultimately one could argue that any physical process has a latent finite-memory or Markov structure, there is a fundamental difference between a Markov process of {\em known} order, and one of finite but unknown order. In the former case, one can just instantiate a model with sufficient capacity and know that it will learn the statistics of all order. In the latter case, one can never know when they have observed enough, and must instead be reeady to add to memory -- effectively treating the process as non-Markov. The theory described in this manuscript pertains to the latter case.

\textbf{Memorization and Generalization.}
Information complexity based generalization theory formalizes the notion that generalization occurs whenever the information the learned hypothesis contains about the training data is minimal (low memorization).  \cite{banerjee2021information} demonstrated that a single information exponential inequality is sufficient to derive PAC Bayes bounds \cite{catoni2003pac}, the mutual information bound \cite{xu2017information}, and a version of the conditional mutual information bound \cite{pmlr-v125-steinke20a}.  Even the classical finite-hypothesis and VC dimension bound \cite{vapnik2015uniform} can be viewed as primitive versions of such bounds.  All the aforementioned results assume that the training and test data are drawn as i.i.d. samples from a common distribution.  In contrast with the aforementioned theory, it has been demonstrated that there are learning tasks where memorization is provably necessary to achieve high accuracy \cite{brown2021memorization}, and that mitigating memorization can cause the model to fail on long-tailed tasks \cite{feldman2020neural, feldman2020does}.  There is evidence that natural language is akin to such long-tailed tasks that require memorization.  Recent work demonstrate that there are models for language which explicitly memorize the training data that perform well.  \cite{khandelwal2019generalization} introduce nearest-neighbor language models (kNN-LM) which predict the next token according to the $k$ nearest neighbors of the context embedding, which requires explicitly encoding all context embeddings and their subsequent token. \cite{min2023silo} demonstrates that augmenting parameteric LLMs with a kNN-LM can significantly boost performance.  Even stronger,  \cite{liu2024infini} demonstrates that a generalization of an $n$-gram model (dubbed $\infty-$gram) outperforms kNN-LM, while losslessly encoding the training data into a suffix array data structure.

{\bf Additional Related Work.} 
\cite{li1992inductive} championed the analysis of inductive learning from the lens of algorithmic complexity.  \cite{hutter2005universal} provides a comprehensive view of the compression approach to learning algorithmic information, complementary to ours.

\cite{schmidhuber2015learning} develops reinforcement learning algorithms with insights from algorithmic information theory. \cite{schmidhuber1997discovering} uses Levin search to find \textit{low-complexity solutions} to a classification task, whereas we use Levin search to define a notion of time for universal task solvers.
\cite{schmidhuber1997shifting}
notes that solutions generally share zero algorithmic information and highlights the importance of learning for decreasing time. They also use an adaptive Levin search with information from the past, although the final solution bears no similarity to our approach. \cite{schmidhuber2013powerplay} applies ideas from active learning to general problem solving. 

Finally, the study of dynamical systems as universal computer was developed extensively in the context of hybrid (continuous/discrete) systems, as exemplified in \cite{branicky1995universal}. Although the type of dynamical systems we describe here evolve in discrete space and time, some of the themes are recurrent and additional insights may be gleaned from revisiting that literature.

\subsection*{Acknowledgments}

Various aspects of the content of this manuscript has been discussed with multiple people during the period of 2021-2024. In particular we thank Ben Bowman, Matthew Trager, Leonid Levin for discussions, and Jurgen Schmidhuber, George Pappas for pointing to references we missed in an earlier version of this manuscript. Paulo Tabuada provided multiple comments and suggested corrections.

\bibliographystyle{plain}
\bibliography{biblio}

\appendix

\section{System-theoretic notation}
\label{app:notation}

Since LLMs are stochastic dynamical systems, in this section we write the definition of proper time in notation more familiar to those with system theory background, where a discrete-time, discrete-state stochastic dynamical model is expressed in the form
\begin{equation}
    \begin{cases}
    x_{t+1} = f(x_t)+ g(x_t,u_t) + v_t \\  y_t = h(x_t) + n_t
\end{cases}
\end{equation}
where $x$ is the state, $f$ the drift vector field, $g$ the input map, $v$ the driving noise, $h$ the output map, and $n$ the  measurement noise. The following table maps these terms to the notation in the paper:
\begin{table}[h!]
    \centering
        \caption{Summary of notation.}
    \label{tab:notation}
\begin{tabular}{rcc}
\toprule
 & \textbf{Automata} & \textbf{System Theory}  \\ 
\midrule
state & $s$  & $x$ \\
state transition & $\nu(\cdot|\cdot)$ & $p(\cdot | \cdot)$  \\
state trajectory &$h = (x_1, \dots, s_n)$ & $x_{1:t}$\\
output map & $a = {\rm dec}(s)$ & $y = h(x)$ \\
input map & $s = {\rm enc}(u)$ & $g(u)$ \\
output & $h_t \terminates a$ & $\Phi(x_0,u_{1:t}) = y_{1:t}$ \\
\bottomrule
\end{tabular}

\end{table}

Let $x_{1:t}$ a state trajectory starting from $x_0$ driven by input $u_{1:t}$, which may include a stochastic component. The transition probability is $p(x_{i+1} | x_i, u_i)$ for all $i = 1, \dots, t$. We denote with $x_t = \Phi(x_0, u_{1:t})$ the {\em flow} of the system from $x_0$ to $x_t$ under input $u_{1:t}$, and $P(x_t | x_0, u_{1:t}) = \prod_{i = 1}^t p(x_{i+1}|x_i, u_i)$ the corresponding probability. A target state $x_t$ is {\em reachable} from $x_0$ if there exits at least one such input. We denote by $R_t(x_0) = \{ x_t \ {\rm s. \ t.}  \ \exists  \ u_{1:t} \ | \ x_t = \Phi(x_0, u_{1:t}) \}$ the set of states reachable in $t$ steps, and $R(x_0) = \{R_t(x_0), \ t \in {\mathbb N}\}$ the set of states reachable from $x_0$. Finally, the reachable set $R = \{R(x_0), \ | \ x_0 \in X\}$ is the set reachable from some initial state. Proper time is then defined as
\[
\tau(x_a, x_b) = \min_{t \ | x_b \in R_t(x_a)}  \frac{t}{P(x_b|x_a, u)}
\]
where $u$ is any $t$-long input such that $x_b = \Phi(x_a, u)$. Alternatively, we can write
\[
\tau(x_a, x_b) = \min_{u \ | \ x_b = \Phi(x_a, u)}  \frac{|u|}{P(x_b|x_a, u)}
\]

The reachable set can be written as the indicator function of the verifier $R(x) = \chi_{\{ y \ | \ f(x,y) = 1\}}$.

\end{document}